\newcommand{\be}{\begin{equation}}
\newcommand{\ee}{\end{equation}}
  \renewcommand{\set}[1]{\left\{#1\right\}}
  \newcommand{\set}[1]{\left\{#1\right\}}
\newtheorem{theorem}{Theorem}
\newtheorem{lemma}{Lemma}
\newtheorem{definition}{Definition}
\newtheorem{example}{Example}
\newtheorem{remark}{Remark}
\def\fp2u{\frac{\partial^2}{\partial u^2}}
\def\noi{\noindent}
\def\bx{{\bf x}}
\def\x{{\bf x}}
\def\bfx{{\bf x}}
\def\RR{{\mathbb R}}
\def\bw{{\bf w}}
\def\w{{\bf w}}
\def\la{\langle}
\def\ra{\rangle}
\def\l{{\ell}}
\def\S{{\cal S}}
\def\lg{\log}
\def\CY{{\cal Y}}
\def\CH{{\cal H}}
\def\hy{\hat{y}}
\renewcommand{\S}{\mathcal{S}}
\title{Precise Regret Bounds for Log-loss via a Truncated Bayesian Algorithm}
\author{Changlong Wu\\
CSoI\\
Purdue University\\
\texttt{wuchangl@hawaii.edu}
\and
Mohsen Heidari\\
CSoI\\
Purdue University\\
\texttt{mheidari@purdue.edu}
\and
Ananth Grama \\
CSoI\\
Purdue University\\
\texttt{ayg@cs.purdue.edu}
\and
Wojciech Szpankowski\\
CSoI\\
Purdue University\\
\texttt{szpan@purdue.edu}
}
\begin{document}

\maketitle

\begin{abstract}
We study the sequential general online regression, known also as
the sequential probability assignments, under logarithmic loss
when compared against a broad class of experts.
We focus on obtaining tight, often matching,
lower and upper bounds for the sequential minimax regret that are defined as
the excess loss it incurs over a class of experts.
After proving a general upper bound 
we consider some
specific classes of experts from Lipschitz class to
bounded Hessian class and derive matching lower
and upper bounds with provably optimal constants.
Our bounds work for a wide range of values of the
data dimension and the number of rounds.
To derive lower bounds, we use
tools from  information theory (e.g., Shtarkov sum)
and for upper bounds, we resort to new
``smooth truncated covering'' of the class of experts.
This allows us to find constructive proofs
by applying a simple and novel truncated Bayesian algorithm.
Our proofs are substantially simpler than the existing ones and
yet provide tighter (and often optimal) bounds.
\end{abstract}

\section{Introduction}

In the online learning and/or sequential probability assignments arising in information
theory, portfolio optimization, and machine learning  
the training algorithm consumes $d$ dimensional data in rounds 
$t\in \{1, 2, \ldots, T\}$ and predicts the label $\hat{y}_t$
based on data received and labels observed so far.
Then the true label $y_t$ is revealed and the loss $\ell(y_t,\hat{y}_t)$ is incurred.
The (pointwise) \emph{regret\/} is defined
as the (excess) loss incurred by the algorithm 
over a class of experts, also called the hypothesis class.

More precisely, at each round $t\ge 1$  
the learner obtains a $d$ dimensional input/ feature
vector $\bx_t\in \RR^d$. In addition to $\bx_t$, the learner may use the past 
observations $(\bx_r,y_r)$, $r<t$ to make a prediction $\hy_t$ of the true label. 
Therefore, the prediction can be written as $\hy_t = \phi_t(y^{t-1}, \bx^t)$, 
where $\phi_t$ represents the strategy of the learner to obtain its prediction 
based on the past and current observations. 
Once a prediction is made, the nature reveals the true label $y_t$ and the learner 
incurs loss $\ell: \hat{\mathcal{Y}}\times \mathcal{Y} \rightarrow \RR$, 
where $\hat{\CY}$ and $\CY$ are the prediction and label domains respectively. 
Hereafter, we assume throughout $\hat{\CY}=[0,1]$, $\mathcal{Y}=\{0,1\}$ with logarithmic loss
\begin{equation}
\label{logloss}
    \l(\hy_t, y_t)=-y_t \log (\hy_t)-(1-y_t) \log (1-\hy_t).
\end{equation}

In regret analysis, we are interested in comparing the accumulated loss of the 
learner with that of the best strategy within a predefined class of predictors (experts)  
denoted as $\mathcal{H}$. More precisely, $\CH$  is a collection of predicting 
functions $h:\RR^d \mapsto \hat{\CY}$ with input being $\bx_t$ at time $t$. 
Therefore, given a learner $\phi_t,t>0$ and  $(y_t, \bx_t)_{t=1}^T$
after $T$ rounds the {\it pointwise regret} is defined as
\begin{align*}
R(\hat{y}^T,y^T,\mathcal{H}|\bx^T)=&\sum_{t=1}^T\ell(\hat{y}_t,y_t)-
\inf_{h\in \mathcal{H}}\sum_{t=1}^T\ell(h(x_t),y_t),
\end{align*}
where $\hat{y}_t=\phi_t(y^{t-1},\bx^{t})$. Observe that the first 
term above represents the accumulated loss incurred by the learning algorithm,
while the second summation deals with the best prediction within $\mathcal{H}$.  
There are two useful perspectives on analyzing the regret,  highlighted next.

\noindent\textbf{Fixed Design:} This point of view studies the minimal regret 
for the worst realization of the label with the feature vector $\bx^T$ known in advance. 
Suppose that the learner has a fixed strategy $\phi_t, t>0$. Then, the 
{\it fixed design minimax regret} for a given $\bx^T$ is defined as
\begin{equation}
\label{eq-regret-fd1}
r_T^*(\CH|\bx^T)= \inf_{\phi^T} \sup_{y^T}R(\phi^T,y^T,\mathcal{H}| \bx^T). 
\end{equation}
Further, the fixed design \textit{maximal}
minimax regret is 
\begin{equation}
\label{eq-mx}
r^*_T(\CH)=\sup_{\bx^T} \inf_{\phi^T} \sup_{y^T}R(\phi^T,y^T,\mathcal{H}| \bx^T).
\end{equation}

\noindent\textbf{Sequential Design:} 
In this paper  we mostly focus on the {\it sequential} or {\it agnostic} regret
in which the optimization on regret is  performed at every time $t$ 
without knowing  in advance $\bx^T$ or $y^T$. Then the 
\textit{sequential (maximal) minimax regret} is
\cite{rakhlin2010online}
\begin{equation}
\label{eq-regret-fxm}
r^a_T(\mathcal{H})=\sup_{\bx_1}\inf_{\hat{y}_1}\sup_{y_1}\cdots 
\sup_{\bx_T}\inf_{\hat{y}_T}\sup_{y_T}
R(\hat{y}^T, {y}^T, \CH| \bx^T).
\end{equation}
In \cite{wu-isit22} it is shown that
$r^a_T(\mathcal{H}) \ge r^*_T(\mathcal{H})$ for all $\CH$.
We will use $r^*_T(\CH)$ as our tool to derive lower bounds 
for $r^a_T(\mathcal{H})$.

Our main goal is to gain insights into the growth of the sequential regret $r^a_T(\mathcal{H})$
for various classes $\CH$ and to show how the structure of $\CH$, 
as well as the relationship between
$d$ and $T$ impact the precise growth of the regret. To see this more clearly,
we briefly review the regret in universal source coding.

\paragraph{Regrets in Information Theory.}
In universal compression the dependence between
the regret and the reference class was intensively studied 
\cite{ds04,os04,rissanen96,shamir06b,spa98,xb97,xb00}.
Here there is no feature
vector $\bx^t$ and the dimension $d=1$. 
A sequence $y^T$ is generated by a source $P$ that belongs to a class of sources $\S$, which can be viewed as the reference class $\CH$ in 
online learning. The minimax regret for 
the logarithmic loss is then \cite{davisson73,shtarkov87,ds04} 
$
r_T^*(\S)= \min_{Q} \max_{y^T}
[ -\log Q(y^T) +\lg \sup_{P\in \S} P(y^T)], 
$
where $Q$ is the universal probability assignment approximating the unknown $P$.
The main question is how the structure of $\S$ impacts the growth of the minimax regret.
Let $m$ denote the alphabet size (in online learning, we only consider $m=2$).
It is known \cite{ds04,os04,rissanen96,shamir06b,spa98,xb97,xb00}
that for Markov sources of order $r$ the regret grows like
$m^r(m-1)/2 \log T$ for fixed $m$
\cite{rissanen96,os04,shamir06b,sw12} while in \cite{sw12} 
the minimax regret was analyzed for
all ranges of $m$ and $T$.
For non-Markovian sources the growth is super logarithmic. For example,
for renewal sources of order $r$ the regret is $\Theta(T^{r/(r+1)})$
\cite{cs95} and the precise constant in front of the leading term is know
for $r=1$ \cite{fs02}. 
However, it should be pointed out that \cite{cb94,bry98} studied the general 
classes of densities smoothly parameterized
by a $d$-dimensional data to obtain general results for the minimax regret that can be phrased as an online regret.

\paragraph{Main Contributions.}
Our main results are summarized in Table~\ref{tab:table1}. 
One of the main contributions of this paper is the concept of
a global sequential covering used to prove \emph{constructively} general upper bounds on the regret  (Theorem \ref{th1}). We establish Theorem \ref{th1} via a novel smooth truncation approach enabling us to find tight upper bounds that subsume the state-of-the-art results (e.g., \cite{rakhlin2015sequential,bilodeau2020tight}) obtained non-algorithmically. In fact, Algorithm \ref{alg:2} developed in this paper achieves these bounds. 
Moreover, Theorem~\ref{th1}
 provides optimal constants that are crucial to derive the best bounds in special cases
discussed next.
For general Lipschitz parametric class $\CH$, in Theorem~\ref{th2}, we derive
the upper bound $d\log(T/d)+O(d)$ for $T>d$. In Theorem~\ref{th3}, we
show that the leading constant $1$ (in front of  $d\log(T/d)$) is optimal for $T\gg d\log(T)$.
Furthermore, we obtain 
the best constant for the leading term $\frac{d}{2}\log (T/d)$ when the Hessian of $\log f$  is bounded for any function $f\in \CH$ (see Theorem \ref{th4}). Then, we show in Theorem~\ref{th5} that the constant $\frac{1}{2}$ in our bound is optimal for functions
of the form $f(\langle\w,\x\rangle)$, where $\w\in \RR^d$ is the parameter of the function, $\langle\w,\x\rangle$ is the inner product in $\RR^d$,  $\w$ and $\x$ are in a general $\ell_s$-norm unite ball, and $T \gg
d^{(s+2)/s}$. This result recovers all the lower bounds
in~\cite{shamir2020logistic} obtained  for the logistic regression (however, the technique of \cite{shamir2020logistic} works for other
functions with bounded second derivatives, like the probit function).
Lastly, when $d\ge T$ , for a linear function
of the form $|\langle\w,\x\rangle|$ we show that the growth is at least
$\Omega(T^{s/(s+1)})$ under  $\ell_s$ ball and at most $\tilde{O}(T^{2/3})$
for  $\ell_2$ ball (see Theorem~\ref{th7} and Example~\ref{ex2}).

The main technique used in our paper (smooth truncation) is novel with potential other applications
(e.g., average minimax regret). Instead of the conventional approach for truncating only the values close to $\{0,1\}$, we truncate all values in $[0,1]$ in a smooth way (see Lemma~\ref{lem3}). This
allows us to obtain an upper bound 
via a simple truncated Bayesian algorithm. Our
proofs are substantially simpler (cf.~\cite{bilodeau2020tight})
yet provide tighter and often optimal bounds.

In summary, our main contributions are: (i) constructive proofs through
a new smooth truncated Bayesian algorithm, (ii) the novel concept of global sequential covering,
(iii) lower and upper bounds with optimal leading constants, and at last (iv) novel
information-theoretic technique for the lower bounds.


\setlength{\tabcolsep}{0.5em} 
\renewcommand{\arraystretch}{1.5}

\small

\begin{table}[h!]
  \begin{center}
    \caption{Summary of results}
    \label{tab:table1}
    \begin{tabular}{l|c|c|r}
    \toprule
      \textbf{Constrains} & $d\text{ v.s }T$ & \textbf{Bounds} & \textbf{Comment}\\
      \midrule
      General $\alpha$ cover $\mathcal{G}_{\alpha}$ & N.A. & $\displaystyle 
r^a_T(\mathcal{H})\le \inf_{0<\alpha<1}\left\{2\alpha T+\log 
|\mathcal{G}_{\alpha}|\right\}$& Theorem~\ref{th1} \\
      \hline
      General Lipschitz $f$  & Any & 
$r^a_T(\mathcal{H}_f)\le d\log\left(\frac{T}{d}+1\right)+O(d)$ &Theorem~\ref{th2}\\
      under $\ell_s$ ball & $T\gg d\log T$ & 
$r^a_T(\mathcal{H}_f)\ge d\log\left(\frac{T}{d}\right)-O(d\log\log T)$ &Theorem~\ref{th3}\\
      \hline
      Bounded Hessian &\multirow{2}{*}{Any} 
&\multirow{2}{*}{ $\displaystyle 
r^a_T(\mathcal{H}_f)\le\frac{d}{2}\log\left(\frac{T}{d}+1\right)+O(d)$} & 
\multirow{2}{*}{Theorem~\ref{th4}}\\
      of $\log f$ under $\ell_2$ ball &&&\\\hline
      $f(\langle \w,\x\rangle)$ with $f'(0)\not=0$ & \multirow{2}{*}{$T\gg d^{(s+2)/s}$}& 
\multirow{2}{*}{$r^a_T(\mathcal{H}_f)\ge \frac{d}{2}\log\left(\frac{T}{d^{(s+2)/s}}\right)-O(d)$}
&\multirow{2}{*}{Theorem~\ref{th5}}\\
      under $\ell_s$ ball &  & 
& \\\hline
      $|\langle\w,\x\rangle|$ under $\ell_s$ ball & 
$d\ge T$ & $r^a_T(\mathcal{H}_f)\ge \frac{s+1}{s\cdot e}T^{s/(s+1)}$ 
&Theorem~\ref{th7}\\
      $|\langle\w, \x\rangle|$ under $\ell_2$ ball & $d\ge T$ & $\Omega(T^{2/3})\le r^a_T(\mathcal{H}_f)\le \tilde{O}(T^{2/3})$  & Example~\ref{ex2}\\\bottomrule
    \end{tabular}
  \end{center}
\end{table}

\normalsize

\paragraph{Related Work}
In this paper we study the sequential minimax regret for a
general online regression with logarithmic loss using tools of
information theory, in particular the universal source coding (lower bounds)
\cite{bry98,ds04,kt83,os04,rissanen84,rissanen96,shamir06b,xb97}
and sequential covering 
(upper bounds).

Most of the existing works in online
regression deal with the logistic regression.
We first mention the work of \cite{hazan14} who studied the
pointwise regret of the logistic regression for the
{\it proper} setting.
Unlike the {\it improper} learning, studied in this paper, where
feature $\bx_t$ at time $t$ is also available to the learner,
\cite{hazan14} showed that the pointwise regret is
$\Theta(T^{1/3})$ for $d=1$ and $O(\sqrt{T})$ for $d>1$.
Furthermore, \cite{kn05} demonstrates results
that regret for logistic regression
grows like $O(d \log T/d)$.
This was further generalized in \cite{foster18}.
These results were strengthened in \cite{shamir2020logistic}, which also provides
the matching lower bounds. Precise asymptotics for the fixed design minimax
regret were recently presented in \cite{jss20,jss21} for $d=o(T^{1/3})$.

Regret bounds under logarithmic loss for general expert class $\mathcal{H}$
was first investigated by Vovk under the framework of mixable losses
\cite{jezequel2021mixability,vovk2001competitive}.  In
particular, Vovk showed that for finite class $\mathcal{H}$, the regret
growth is $\log |\mathcal{H}|$ via the \emph{aggregating algorithm} (i.e.,
the Bayesian algorithm that we will discuss below). We refer the reader
to~\cite[Chapter 3.5, 3.6]{lugosi-book} and the references therein for more
results on this topic. Cesa-Bianchi and Lugosi \cite{lugosi-book} were the first to
investigate log-loss under general (infinite) expert class
$\mathcal{H}$~\cite[Chapter 9.10, 9.11]{lugosi-book}, where they derived a
general upper bound using the concept of covering number and a two-stage
prediction scheme. In particular, Cesa-Bianchi and Lugosi showed that for
Lipschitz parametric classes with values bounded away from $\{0,1\}$, one
can achieve a regret bound of the form $d/2\log(T/d)$. When the values are
close to $\{0,1\}$, they used a \emph{hard} truncation approach, which
gives a sub-optimal bound of the form $2d\log(T/d)$. Moreover, the approach
of \cite{lugosi-book} only works for the fixed design regret (or
\emph{simulatable} in their context). In~\cite{rakhlin2015sequential}, the
authors extended the result of~\cite[Chapter 9.10]{lugosi-book} to the
sequential case via the machinery of sequential covering that was
established in~\cite{rakhlin14}. However,~\cite{rakhlin2015sequential} also
used the same \emph{hard} truncation as in~\cite{lugosi-book} resulting
in suboptimal upper bounds. In~\cite{bilodeau2020tight}, the authors
obtained an upper bound similar to the upper bound presented in Theorem~\ref{th1}
via the observation that the $\log$ function is self-concordance. In
particular, this allows them to resolve the tight  bounds for
non-parametric Lipschitz functions map $[0,1]^s\rightarrow [0,1]$.


\section{Problem Formulation and Preliminaries}

We denote $\mathcal{X}$ as the input feature space and $\mathcal{H}$ as the concept class which is a set of functions mapping $\mathcal{X}\rightarrow [0,1]$. We often use an auxiliary set $\mathcal{W}$ to index $\mathcal{H}$. We say a function $g$ is \emph{sequential} if it maps $\mathcal{X}^*\rightarrow [0,1]$, where $\mathcal{X}^*$ is set of all finite sequences with elements in $\mathcal{X}$. We denote $\mathcal{G}$ as a class of \emph{sequential} functions. If $T$ is a time horizon, then for any $t\in [T]$, we write $\x^t=\{\x_1,\cdots,\x_t\}$ and $y^t=\{y_1,\cdots,y_t\}$. We use standard asymptotic notation $f(t)=O(g(t))$ or $f(t) \ll g(t)$ if there exist constant $C$ such that $f(t)\le Cg(t)$ for all $t\ge 0$.

The main objective of this paper is to study the growth of the
sequential minimax regret $r^a_T(\CH)$ for a large class of experts $\CH$.
We accomplish it with two different techniques.
For the lower bound, we precisely estimate the  fixed design minimax regret $r_T^*(\CH|\bx^T)$
using the Shtarkov sum \cite{shtarkov87}, discussed next. For the upper bound, we construct a novel global
cover set ${\mathcal G}$ of $\CH$ and design a new (truncated) Bayesian algorithm 
to find precise bounds with constants that are provably optimal. 

\paragraph{Lower Bounds.} We investigate the lower bound of the adversarial regret
$r^a_T(\CH)$ by considering its corresponding fixed design minimax regret $r^*_T(\CH|\bx^T)$ and
$r^*_T(\CH)=\max_{\bx^T} r^*_T(\CH|\bx^T)$.
We are able to do it due to the recent result \cite{wu-isit22} which we quote next.

\begin{lemma}[Wu et al., 2022]
\label{th-general}
Let $\CH$ be any general hypothesis class and $\l$ be any loss function.
Then $r^a_T(\mathcal{H}|\bx^T)=r^*_T(\mathcal{H}|\bx^T)$
for any $\bfx^T \in \mathcal{X}^T$.
Furthermore, 
$$r^a_T(\mathcal{H})\ge r^*_T(\mathcal{H}),$$
and the inequality is strict for certain $\mathcal{H}$, and loss function $\ell$.
\end{lemma}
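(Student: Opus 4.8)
The plan is to prove the three claims in turn, treating the identity $r^a_T(\mathcal{H}\mid\bx^T)=r^*_T(\mathcal{H}\mid\bx^T)$ as the heart of the matter, deducing $r^a_T(\mathcal{H})\ge r^*_T(\mathcal{H})$ from it, and finally producing a separating instance.

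\textbf{Step 1 (collapse for a fixed feature string).} Fix $\bx^T\in\mathcal{X}^T$. On the left the adversary is forced to reveal exactly $\bx^T$, so
\[
r^a_T(\mathcal{H}\mid\bx^T)=\inf_{\hy_1\in[0,1]}\sup_{y_1}\cdots\inf_{\hy_T\in[0,1]}\sup_{y_T}R(\hy^T,y^T,\mathcal{H}\mid\bx^T),
\]
whereas $r^*_T(\mathcal{H}\mid\bx^T)=\inf_{\phi^T}\sup_{y^T}R(\phi^T,y^T,\mathcal{H}\mid\bx^T)$; since $\bx^T$ is held fixed, a learner strategy is just a tuple of maps $\phi_t\colon\{0,1\}^{t-1}\to[0,1]$. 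I would obtain equality via the classical identification of the normal and extensive forms of a finite-horizon perfect-recall game. The mechanism is to pull the label-suprema outward one round at a time: writing $\inf_{\phi^T}=\inf_{\phi_1}\inf_{\phi_{2:T}}$ and $\sup_{y^T}=\sup_{y_1}\sup_{y_{2:T}}$, and noting that after fixing $y_1$ the quantity $\sup_{y_{2:T}}R$ depends only on the restriction of $\phi_{2:T}$ to histories beginning with $y_1$, the two restrictions being independent coordinates, the elementary identity $\inf_{(a,b)}\max\{g(a),h(b)\}=\max\{\inf_a g(a),\inf_b h(b)\}$ (valid because $\sup_{y_1}$ is a two-element maximum) gives
\[
\inf_{\phi_{2:T}}\sup_{y_1}\bigl[\,\cdots\,\bigr]=\sup_{y_1}\inf_{\phi_{2:T}(\cdot\mid y_1)}\bigl[\,\cdots\,\bigr],
\]
and recursing on $t$ reconstructs the alternating ladder, i.e.\ $r^a_T(\mathcal{H}\mid\bx^T)$. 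Carrying $\eps$-optimal learner moves through the recursion accommodates non-attainment of the infimum over $[0,1]$; the blow-up of the log-loss near $\{0,1\}$ is harmless since the competing expert suffers the same singularity, so the regret stays controlled.

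\textbf{Step 2 (the inequality).} In the fully sequential game the adversary may ignore the realized history and commit in advance to any feature string; committing to $\bx^T$ reduces $r^a_T(\mathcal{H})$ to the restricted game $r^a_T(\mathcal{H}\mid\bx^T)$, so $r^a_T(\mathcal{H})\ge r^a_T(\mathcal{H}\mid\bx^T)$ for every $\bx^T$. Invoking Step 1 and taking the supremum over $\bx^T$,
\[
r^a_T(\mathcal{H})\ \ge\ \sup_{\bx^T}r^a_T(\mathcal{H}\mid\bx^T)\ =\ \sup_{\bx^T}r^*_T(\mathcal{H}\mid\bx^T)\ =\ r^*_T(\mathcal{H}).
\]

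\textbf{Step 3 (strictness).} Here I would exhibit an explicit witness to a strict gap. It suffices to take $T=2$, a two-point feature space, a two-element class $\mathcal{H}$, and---since the statement permits it---a bounded loss, arranged so that the adversary's optimal second feature $\bx_2$ genuinely depends on $(\hy_1,y_1)$ and the value available by choosing $\bx_2$ adaptively strictly exceeds the value of any fixed $\bx_2$; evaluating the two nested $\inf\sup$ ladders directly then yields $r^a_T(\mathcal{H})>r^*_T(\mathcal{H})$. The construction only needs the identity of the best round-$2$ expert to depend on $\bx_2$ in a way the adversary can steer through $y_1$. The delicate point throughout is Step 1: one must check that ``the learner may condition on the full observed past'' is exactly what licenses interchanging the commit-to-a-strategy quantifier with round-by-round play, and that unboundedness of the log-loss near $\{0,1\}$ does not disrupt the $\eps$-optimality bookkeeping. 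Steps 2 and 3 are then routine---Step 3 being a matter of selecting the example rather than surmounting a difficulty.
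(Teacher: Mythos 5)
The paper does not itself prove this lemma: it is quoted as a citation from Wu et al.\ 2022 (reference~\cite{wu-isit22}), so there is no in-paper argument to compare your proposal against. Judging it on its own terms, Steps~1 and~2 are correct. Step~1 is the normal-form/extensive-form equivalence for a finite-horizon perfect-information game, and the mechanism you give is the right one: because the label alphabet is finite, each $\sup_{y_t}$ is a two-element maximum, so the elementary identity $\inf_{(a,b)}\max\{g(a),h(b)\}=\max\{\inf_a g(a),\inf_b h(b)\}$ applies once one observes that $\phi_{2:T}$ splits into independent restrictions to the $y_1=0$ and $y_1=1$ subtrees, and backward induction then peels off one round at a time. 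Step~2 is the standard restriction of the adversary to an oblivious feature sequence, combined with the observation (which you make correctly) that once $\bx^T$ is deterministic, whether the learner is told the future features in advance is game-theoretically inert, so the oblivious-adversary value of the sequential game coincides with $r^a_T(\mathcal{H}\mid\bx^T)$.

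Step~3 is a genuine gap. You assert that a two-round, two-feature, two-expert instance with a bounded loss will witness $r^a_T(\mathcal{H})>r^*_T(\mathcal{H})$, and you describe the qualitative shape such an instance should have, but you never exhibit one, deferring it as ``a matter of selecting the example.'' That deferral does real work: one must actually specify $\mathcal{X}$, $\mathcal{H}$, and $\ell$, evaluate both nested $\inf$-$\sup$ ladders, and check the strict inequality, and the instance must exploit the one capability that separates the two games after Steps~1--2, namely that in $r^a_T$ the adversary may choose $\bx_t$ after seeing the learner's realized predictions $\hat{y}^{t-1}$ rather than committing to $\bx^T$ before the learner commits to a strategy. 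Until such a witness is written down and verified, the strictness clause is unproved. A minor side point: the remark that the log-loss blow-up near $\{0,1\}$ is ``harmless since the competing expert suffers the same singularity'' is not quite right as stated (learner and comparator need not be singular on the same labels), but it is also unnecessary here, since the lemma is stated for an arbitrary loss and the $\eps$-optimality bookkeeping you already invoke handles non-attained infima for any extended-real-valued $\ell$.
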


We establish precise growth of $r^*_T(\CH)$ 
by estimating the Shtarkov sum 
that was intensively analyzed in information theory \cite{shtarkov87,ds04}
and recently applied in online learning \cite{ss20, jss20}.
For the logarithmic loss,
the Shtarkov sum (conditioned on $\x^T$) is defined as follows \footnote{Note that the Starkov sum can be defined for any class of measures, however, here we only use the form for product measures.}
$$S_T(\mathcal{H}|\bx^T)\overset{\text{def}}{=}\sum_{y^T\in \{0,1\}^T}\sup_{h\in\mathcal{H}}P_h(y^T\mid \x^T),$$
where $P_h(y^T\mid \x^T)=\prod_{t=1}^Th(\x_t)^{y_t}(1-h(\x_t))^{1-y_t}$ and we \emph{interpret}
$h(\bx_t) =P(y_t=1| \bx_t)$. The regret can be expressed in terms of the Shtarkov sum (see~\cite[Equation (6)]{jss20} or~\cite[Theorem 9.1]{lugosi-book}) as 
\begin{align}
\label{eq-shtarkov}
    r^*_T(\mathcal{H})=\sup_{\bx^T}\log S_T(\mathcal{H}|\bx^T).
\end{align}
 It is known that the leading term in the Shtarkov sum for a large class of 
$\CH$ is often independent of $\bx^T$ \cite{shamir2020logistic,foster18,jss20,jss21}.
Therefore,  the Shtarkov sum often gives the leading growth of $r_T^*(\CH|\bx^T)$ independent of $\bx^T$, which also suggests the leading growth of the agnostic regret $r^a_T(\CH)$.

\paragraph{Upper Bounds.}
We now discuss our constructive approach to upper bounds. In the next section,
we present our Smooth truncated Bayesian Algorithm (Algorithm~\ref{alg:2}) that provides a constructive
and often achievable upper bound. 
Here we focus on some, mostly known, preliminaries.

Let $\mathcal{G}$ be any reference class map $\mathcal{X}^*\rightarrow [0,1]$. 
Let $\mathcal{W}$ be an index set of $\mathcal{G}$ and $\mu$ be an 
arbitrary finite measure over $\mathcal{W}$. The standard Bayesian predictor with prior 
$\mu$ is presented in Algorithm~\ref{alg:1}. Based on this algorithm, we have the following
two lemmas~\cite[Chapter 3.3]{lugosi-book} that are used to establish most of the upper bounds in this paper. For completeness, we provide simple proofs in Appendix~\ref{app:1}.

\begin{algorithm}[h]
\caption{Bayesian predictor}\label{alg:1}
\textbf{Input}: Reference class $\mathcal{G}:=\{g_w: w\in \mathcal{W}\}$ with index set 
$\mathcal{W}$ and prior $\mu$ over $\mathcal{W}$

\begin{algorithmic}[1]
\State Set $p_w(y^0\mid \bx^0)=1$ for all $w\in \mathcal{W}$.
\For{$t=1,\cdots, T$}
\State Receive feature vector $\bx_t$
\State Make prediction with the following equation: 
    $$\hat{y}_t=\frac{\int_{\mathcal{W}} g_w(\bx^t)p_w(y^{t-1}\mid \bx^{t-1})
\text{d}\mu}{\int_{\mathcal{W}} p_w(y^{t-1}\mid \bx^{t- 1})\text{d}\mu}.$$
\State Receive label $y_t$
\State For all $w\in \mathcal{W}$, update:
    $~p_w(y^t\mid \bx^t)=e^{-\ell(g_w(\bx^t),y_t)}p_w(y^{t-1}\mid \bx^{t-1})$.
\EndFor
\end{algorithmic}



\end{algorithm}

\begin{lemma}
\label{lem1}
Let $\mathcal{G}$ be a collection of functions $g_w: \mathcal{X}^*\rightarrow [0,1], w\in \mathcal{W}$. Let $\hat{y}_t$ be the  Bayesian 
prediction rule as in Step 4 of  Algorithm~\ref{alg:1} with prior $\mu$. Then, for any $\x^T$ and $y^T$ we have
$$
\sum_{t=1}^T\ell(\hat{y}_t,y_t)\le -\log \frac{\int_{\mathcal{W}} p_w(y^T\mid \x^T)\mathrm{d}\mu}{\int_{\mathcal{W}} 1 
\mathrm{d}\mu},
$$
where $p_w(y^T\mid \x^T)=e^{-\sum_{t=1}^T\ell(g_w(\x^t),y_t)}$ and $\ell$ is the log-loss as in equation~(\ref{logloss}).
\end{lemma}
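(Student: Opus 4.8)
The plan is to recognize that for logarithmic loss the Bayesian (aggregating) prediction is \emph{exactly} mixable with learning rate $1$, so that the claimed inequality is in fact an equality obtained by a one-line telescoping argument. Concretely, I would introduce the aggregate weights
$$W_t := \int_{\mathcal{W}} p_w(y^t\mid \x^t)\,\mathrm{d}\mu, \qquad W_0 := \int_{\mathcal{W}} 1\,\mathrm{d}\mu,$$
so the right-hand side of the lemma is precisely $-\log(W_T/W_0)$. Since each update in Step~6 of Algorithm~\ref{alg:1} multiplies $p_w$ by a factor $e^{-\ell(g_w(\x^t),y_t)}=g_w(\x^t)^{y_t}(1-g_w(\x^t))^{1-y_t}\in[0,1]$, the $W_t$ are nonincreasing; if $W_T=0$ the bound reads $+\infty$ and there is nothing to prove, so I may assume $W_t>0$ for all $t$, and then $-\log(W_T/W_0)=\sum_{t=1}^{T}\big(\log W_{t-1}-\log W_t\big)$ by telescoping.

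The key step is the per-round identity $e^{-\ell(\hat{y}_t,y_t)} = W_t/W_{t-1}$, which follows by unfolding definitions. By Step~6, $p_w(y^t\mid\x^t)=g_w(\x^t)^{y_t}(1-g_w(\x^t))^{1-y_t}\,p_w(y^{t-1}\mid\x^{t-1})$, hence
$$W_t = \int_{\mathcal{W}} g_w(\x^t)^{y_t}(1-g_w(\x^t))^{1-y_t}\,p_w(y^{t-1}\mid\x^{t-1})\,\mathrm{d}\mu.$$
If $y_t=1$ this integral is exactly the numerator appearing in Step~4, so $W_t=\hat{y}_t\,W_{t-1}$; if $y_t=0$ the integral equals $W_{t-1}$ minus that same numerator, so $W_t=(1-\hat{y}_t)\,W_{t-1}$. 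In either case $W_t/W_{t-1}=\hat{y}_t^{\,y_t}(1-\hat{y}_t)^{1-y_t}=e^{-\ell(\hat{y}_t,y_t)}$, i.e. $\ell(\hat{y}_t,y_t)=\log W_{t-1}-\log W_t$. Summing over $t$ and telescoping gives $\sum_{t=1}^{T}\ell(\hat{y}_t,y_t)=-\log(W_T/W_0)$, which is even stronger than the stated inequality.

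I do not anticipate a genuine obstacle; the only care needed is in the degenerate cases where $\hat{y}_t\in\{0,1\}$ or some $W_t=0$, where both sides equal $+\infty$ and the identity persists in the extended reals, and in making explicit that the product form $p_w(y^t\mid\x^t)=\prod_{r\le t} g_w(\x^r)^{y_r}(1-g_w(\x^r))^{1-y_r}$ is exactly what the multiplicative update of Algorithm~\ref{alg:1} produces, so that the telescoping is legitimate. I would also remark that the same computation shows the inequality is tight, which is precisely why the subsequent regret bounds carry optimal constants.
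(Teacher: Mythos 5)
Your proof is correct and tracks the same telescoping skeleton as the paper's: introduce the aggregate weight $W_t=\int p_w(y^t\mid\x^t)\,\mathrm{d}\mu$, relate $W_t/W_{t-1}$ to $e^{-\ell(\hat{y}_t,y_t)}$, and telescope. The one genuine difference is in the key per-round step. The paper observes that $e^{-\ell(\cdot,y)}$ is \emph{concave} on $[0,1]$ and invokes Jensen's inequality, obtaining $e^{-\ell(\hat{y}_t,y_t)}\ge W_t/W_{t-1}$; you instead unpack the integral in the numerator of Step~4 and note that for $y\in\{0,1\}$ the map $p\mapsto p^y(1-p)^{1-y}$ is \emph{affine}, so the ratio is exactly $\hat{y}_t^{y_t}(1-\hat{y}_t)^{1-y_t}$ and the inequality is an equality. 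Your observation is sharper: it makes explicit that for binary log-loss the Bayesian mixture predictor is not merely $1$-mixable but exactly realizes the aggregate weight ratio, which is worth a remark. The paper's Jensen route is marginally more general (it would extend verbatim to any loss whose exponentiated negative is concave in the prediction, e.g. non-binary alphabets where linearity may fail), but for this lemma both arguments are valid and yours costs nothing in generality since the statement fixes $\mathcal{Y}=\{0,1\}$. Your handling of the degenerate cases ($W_t=0$, $\hat{y}_t\in\{0,1\}$) is also a small improvement in rigor over the paper, which leaves them implicit.
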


The following lemma bounds the regret under log-loss of finite classes, 
which is well known. 

\begin{lemma}
\label{lem2}
For any finite class of experts $\mathcal{G}$, we have
$r^a_T(\mathcal{G})\le \log |\mathcal{G}|.$
\end{lemma}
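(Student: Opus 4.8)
The plan is to instantiate the Bayesian predictor of Algorithm~\ref{alg:1} with the uniform (counting) prior on the finite set $\mathcal{G}$ and then invoke Lemma~\ref{lem1}. First I would take $\mathcal{W}=\mathcal{G}$ and let $\mu$ be the counting measure on $\mathcal{G}$, so that $\int_{\mathcal{W}}1\,\mathrm{d}\mu=|\mathcal{G}|$ and $\int_{\mathcal{W}}p_w(y^T\mid\x^T)\,\mathrm{d}\mu=\sum_{g\in\mathcal{G}}e^{-\sum_{t=1}^T\ell(g(\x^t),y_t)}$. The key observation that makes this relevant for the \emph{sequential} regret is that the resulting rule is a legitimate sequential strategy: the prediction $\hat{y}_t$ in Step~4 of Algorithm~\ref{alg:1} depends only on $\x^t$ and $y^{t-1}$, hence it is an admissible choice of $\phi_t$ in the definition~(\ref{eq-regret-fxm}) of $r^a_T$.

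Next I would apply Lemma~\ref{lem1} to this predictor to bound its cumulative loss. For every $\x^T$ and $y^T$,
\begin{align*}
\sum_{t=1}^T\ell(\hat{y}_t,y_t)
&\le -\log\frac{\sum_{g\in\mathcal{G}}e^{-\sum_{t=1}^T\ell(g(\x^t),y_t)}}{|\mathcal{G}|}\\
&\le -\log\frac{\sup_{g\in\mathcal{G}}e^{-\sum_{t=1}^T\ell(g(\x^t),y_t)}}{|\mathcal{G}|}\\
&= \inf_{g\in\mathcal{G}}\sum_{t=1}^T\ell(g(\x^t),y_t)+\log|\mathcal{G}|,
\end{align*}
where the second inequality keeps only the largest summand and the last equality uses monotonicity of the logarithm together with $\sup_g e^{-L_g}=e^{-\inf_g L_g}$. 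Subtracting $\inf_{g\in\mathcal{G}}\sum_{t=1}^T\ell(g(\x^t),y_t)$ from both sides yields $R(\hat{y}^T,y^T,\mathcal{G}\mid\x^T)\le\log|\mathcal{G}|$ pointwise in $(\x^T,y^T)$.

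Finally, since this bound holds for every realization $(\x^T,y^T)$, it survives the adversarial $\sup$/$\inf$ alternation in~(\ref{eq-regret-fxm}); because our concrete predictor already achieves it, the minimax value satisfies $r^a_T(\mathcal{G})\le\log|\mathcal{G}|$. I do not expect any real obstacle here: the only two points that need care are (i) checking that the aggregating/Bayesian rule is genuinely sequential rather than fixed-design, which is immediate from its form, and (ii) noting that replacing the Bayesian mixture $\sum_{g\in\mathcal{G}}e^{-L_g}$ by its largest term $\sup_{g\in\mathcal{G}}e^{-L_g}$ is precisely the step that converts the mixture bound of Lemma~\ref{lem1} into a comparison against the single best expert.
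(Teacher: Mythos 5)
Your proof is correct and follows essentially the same path as the paper's: both instantiate the Bayesian predictor of Algorithm~\ref{alg:1} with a uniform prior on $\mathcal{G}$, apply Lemma~\ref{lem1}, and then bound the mixture $\sum_{g}e^{-L_g}$ from below by its single largest term to compare against the best expert. The only cosmetic difference is that you use the counting measure (so $\int 1\,\mathrm{d}\mu=|\mathcal{G}|$) while the paper normalizes to a probability measure (so $\int 1\,\mathrm{d}\mu=1$ and the $\log|\mathcal{G}|$ emerges from the mixture term instead); the resulting bound is identical.
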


\section{Main Results}
\label{sec-main}


We start with a novel covering set called the  {\it global sequential cover} that
is different than the one used in \cite{bilodeau2020tight}: 
\begin{definition}[Global sequential covering]
\label{def1}
For any $\mathcal{H}\subset [0,1]^{\mathcal{X}}$, we say class $\mathcal{G}$ of functions map $\mathcal{X}^*\rightarrow [0,1]$ is a global 
\emph{sequential} $\alpha$-covering of $\mathcal{H}$ at scale $T$ if for any 
$\x^T\in \mathcal{X}^T$ and $h\in \mathcal{H}$, there
exists $g\in \mathcal{G}$ such that $\forall t\in [T]$, 
$$|h(\x_t)-g(\x^t)|\le \alpha.$$
Throughout we assume that $0 < \alpha <1$.
\end{definition}

Note that the main difference between  our \emph{global} sequential covering and  the sequential covering used in~\cite{bilodeau2020tight} is that our covering function \emph{does not} depend on the underlying trees introduced  in~\cite{rakhlin2010online} \footnote{Note that the covering functions in Definition~\ref{def1} can be viewed as the experts constructed in~\cite[Section 6.1]{rakhlin2010online}}. This is crucial to apply our covering set directly in an algorithmic way (see Algorithm~\ref{alg:2}). 
Particularly, it enables us to establish our lower and upper bounds for Lipschitz classes of
functions with the optimal constants on the leading term. We further improve these results for
Lipschitz class with bounded Hessian. Finally, we study the cases when the data dimension $d$ is growing faster than $T$ by bounding the covering size through the sequential fat-shattering number. In particular, we prove matching (up to $\text{poly}\log T$ factor) upper and lower bounds for the generalized linear functions.
\paragraph{General Results.}
We are now in the position to state our first main general finding.

\begin{theorem}
\label{th1}
If for any $\alpha>0$ there exists a 
global sequential $\alpha$-covering set $\mathcal{G}_{\alpha}$ of $\mathcal{H}$, then
\begin{equation}
\label{eq-general}
r^a_T(\mathcal{H})\le\inf_{0< \alpha< 1}\left\{2\alpha T+\log|\mathcal{G}_{\alpha}|\right\},
\end{equation}
and this bound is achievable algorithmically.
\end{theorem}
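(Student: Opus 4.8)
The plan is to construct a finite reference class from the global sequential covering and run the Bayesian predictor of Algorithm~\ref{alg:1} on it, then pay for the truncation error separately. Fix $\alpha \in (0,1)$ and let $\mathcal{G}_\alpha$ be the global sequential $\alpha$-covering set guaranteed by hypothesis; it is finite with $|\mathcal{G}_\alpha|$ elements. I would feed $\mathcal{G}_\alpha$ into Algorithm~\ref{alg:1} with the uniform prior $\mu$ over its $|\mathcal{G}_\alpha|$ indices. By Lemma~\ref{lem2} (or directly Lemma~\ref{lem1} with uniform $\mu$), the cumulative log-loss of this Bayesian predictor satisfies, for every $\bx^T$ and $y^T$,
\begin{equation*}
\sum_{t=1}^T \ell(\hat{y}_t, y_t) \le \log|\mathcal{G}_\alpha| - \log \max_{g\in\mathcal{G}_\alpha} p_g(y^T\mid \bx^T) = \log|\mathcal{G}_\alpha| + \inf_{g\in\mathcal{G}_\alpha}\sum_{t=1}^T \ell(g(\bx^t), y_t).
\end{equation*}

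The second step is to translate the loss of the best $g\in\mathcal{G}_\alpha$ back to the loss of the best $h\in\mathcal{H}$, controlling the error incurred by the $\alpha$-approximation. Given the true labels $y^T$ and any $\bx^T$, pick $h^\star \in \mathcal{H}$ achieving (or approaching) $\inf_{h\in\mathcal{H}}\sum_t \ell(h(\bx_t),y_t)$; by Definition~\ref{def1} there is $g^\star\in\mathcal{G}_\alpha$ with $|h^\star(\bx_t) - g^\star(\bx^t)|\le\alpha$ for all $t\le T$. The remaining task is a per-round estimate of the form $\ell(g^\star(\bx^t),y_t) - \ell(h^\star(\bx_t),y_t) \le 2\alpha$. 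Here is where the \emph{smooth truncation} must enter: the raw inequality fails because $-\log$ blows up near $0$, so one cannot directly compare $-\log g^\star$ with $-\log h^\star$ when $h^\star$ is near $0$ or $1$. Instead I would run the Bayesian predictor not on $\mathcal{G}_\alpha$ itself but on its smooth truncation (the construction referenced as Lemma~\ref{lem3} in the paper), i.e.\ replace each $g$ by a clipped/smoothed version $\tilde g$ taking values in $[\alpha, 1-\alpha]$ (or a comparable window), so that both the predicted probabilities stay bounded away from the endpoints and the truncated cover still $O(\alpha)$-approximates $\mathcal{H}$. For such truncated functions, $|\log \tilde g(\bx^t) - \log h^\star(\bx_t)|$ and the analogous $\log(1-\cdot)$ difference are each $O(\alpha)$ uniformly, giving $\sum_t \ell(\tilde g^\star(\bx^t), y_t) \le \inf_{h\in\mathcal{H}}\sum_t\ell(h(\bx_t),y_t) + 2\alpha T$ — the $2\alpha T$ term in the bound. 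Combining with the displayed Bayesian bound and taking the supremum over $\bx^T, y^T$ followed by the infimum over $\alpha$ yields \eqref{eq-general}; since the whole construction is the explicit run of Algorithm~\ref{alg:1} (its truncated variant, Algorithm~\ref{alg:2}), the bound is achieved algorithmically.

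The main obstacle is the truncation step: one must verify simultaneously that (a) truncating $\mathcal{G}_\alpha$ into a window like $[\alpha,1-\alpha]$ preserves the global sequential covering property at a comparable scale — since $h\in\mathcal{H}$ itself may take values near the endpoints, truncation of $g$ toward $h$ can only help on the ``bad'' side, and one must check the clip does not push $\tilde g$ more than $O(\alpha)$ from $h$ on the ``good'' side; and (b) that the per-round loss overhead is genuinely $2\alpha$ rather than something larger like $\alpha/(1-\alpha)$ forcing a worse constant — this is precisely the gain over the \emph{hard} truncation of \cite{lugosi-book,rakhlin2015sequential} and is where the constant $2$ (rather than, say, $4$ or a $\log$ factor) comes from. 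The linearization bound $|\log a - \log b| \le |a-b|/\min(a,b)$ must be applied with $\min(a,b)\ge$ the truncation floor, and the floor has to be tied to $\alpha$ in exactly the right way so the product $(\text{floor})^{-1}\cdot\alpha$ collapses to a clean constant; a symmetric argument handles the $\log(1-\cdot)$ half, and summing the two halves across the indicator on $y_t\in\{0,1\}$ gives the $2\alpha$ per round. Everything else — uniform prior, Lemma~\ref{lem1}, passing sup/inf through — is routine.
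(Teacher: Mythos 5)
Your high-level plan matches the paper's: run the Bayesian predictor on a suitably modified copy of $\mathcal{G}_\alpha$, split the regret into the Bayesian term $\log|\mathcal{G}_\alpha|$ (Lemma~\ref{lem2}) and a per-round truncation overhead, and take the infimum over $\alpha$. However, the way you describe the truncation step would not actually produce the $2\alpha T$ overhead, and this is the crux of the argument.

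You describe $\tilde g$ as a ``clipped/smoothed version \ldots taking values in $[\alpha,1-\alpha]$,'' and you propose to control the per-round overhead via the linearization $|\log a-\log b|\le |a-b|/\min(a,b)$ using the truncation floor. Two things go wrong. First, if $\tilde g$ is obtained by \emph{clipping} $g$ to $[\alpha,1-\alpha]$, the per-round bound fails outright: take $h=2\alpha$, $g=\alpha$ (a valid $\alpha$-cover), so $\tilde g=\alpha$ and, for $y_t=1$, $\ell(\tilde g,y_t)-\ell(h,y_t)=\log(h/\tilde g)=\log 2$, a constant independent of $\alpha$, which could be incurred every round. Second, your linearization cannot be applied with ``$\min(a,b)\ge$ truncation floor'' because one of the two arguments is $h^\star(\x_t)$, which is \emph{not} truncated and may be arbitrarily close to $0$ or $1$; consequently your uniform claim ``$|\log\tilde g(\x^t)-\log h^\star(\x_t)|=O(\alpha)$'' is simply false. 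What you actually need, and all you need, is the one-sided inequality $\log h^\star(\x_t)-\log\tilde g(\x^t)\le\log(1+2\alpha)$ (and its counterpart for $1-\cdot$), and this only holds with the specific affine rescaling used in Algorithm~\ref{alg:2}/Lemma~\ref{lem3}: $\tilde g(\x^t)=(g(\x^t)+\alpha)/(1+2\alpha)$, applied to \emph{all} values of $g$, not only those near the endpoints. With this choice, for $y_t=1$ one has $h\le g+\alpha$ and hence $h/\tilde g\le (g+\alpha)\cdot(1+2\alpha)/(g+\alpha)=1+2\alpha$, and symmetrically $1-\tilde g=(1-g+\alpha)/(1+2\alpha)$ gives $(1-h)/(1-\tilde g)\le 1+2\alpha$ for $y_t=0$. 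Telescoping gives $p_{h}(y^T\mid\x^T)/p_{\tilde g}(y^T\mid\x^T)\le(1+2\alpha)^T$, i.e.\ overhead $T\log(1+2\alpha)\le 2\alpha T$. The affine form is essential: it is not a clip, and the ratio argument, not a $\log$-linearization, is what yields the clean constant $2$.
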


We should point out that Theorem~\ref{th1} also improves the results of 
\cite{bilodeau2020tight} by obtaining better constants in front of both $\alpha T$
and $\log |\mathcal{G_\alpha}|$. The proof is based on the following key lemma that is established in  Appendix~\ref{app:2}.

\begin{algorithm}[ht]
\caption{Smooth truncated Bayesian predictor}\label{alg:2}
\textbf{Input}: Reference class $\mathcal{G}$ with index set $\mathcal{W}$ and prior $\mu$ over $\mathcal{W}$, and truncation parameter $\alpha$

\begin{algorithmic}[1]
\State Let $p_w(y^0\mid \x^0)=1$ for all $w\in \mathcal{W}$
\For{$t=1,\cdots, T$}
\State Receive feature $\x_t$
\State For all $w\in \mathcal{W}$, set
                $$\tilde{g}_w(\x^t)=\frac{g_w(\x^t)+\alpha}{1+2\alpha}$$
\State Make prediction
    $$\hat{y}_t=\frac{\int_{\mathcal{W}} \tilde{g}_w(\x^t)p_w(y^{t-1}\mid \x^{t-1})
\text{d}\mu}{\int_{\mathcal{W}} p_w(y^{t- 1}\mid \x^{t-1})\text{d}\mu}$$
\State Receive label $y_t$
\State For all $w\in \mathcal{W}$, update:
    $~p_w(y^t\mid \x^t)=e^{-\ell(\tilde{g}_w(\x^t),y_t)}p_w(y^{t-1}\mid \x^{t-1})$.
\EndFor
\end{algorithmic}



\end{algorithm}

\begin{lemma}
\label{lem3}
Suppose $\mathcal{H}$ has a global sequential $\alpha$-covering set $\mathcal{G}$ for some $\alpha\in [0,1]$.
Then, there exists a truncated set $\tilde{\mathcal{G}}$ of $\mathcal{G}$ with $|\tilde{\mathcal{G}}|= |\mathcal{G}|$ 
such that for all $\x^T,y^T$ and $h\in \mathcal{H}$ there exists a 
$\tilde{g}\in \tilde{\mathcal{G}}$ satisfying
\begin{equation}
\label{eq1-lema}
\frac{p_h(y^T\mid \x^T)}{p_{\tilde{g}}(y^T\mid\x^T)}\le\left(1+2\alpha\right)^T,
\end{equation}
where
$$
p_h(y^T\mid \x^T)=\prod_{t=1}^T h(\x_t)^{y_t}(1-h(\x_t))^{1-y_t} \ \ \ 
{\rm and} \ \ \ 
p_{\tilde{g}}(y^T\mid \x^T)=\prod_{t=1}^T\tilde{g}(\x^t)^{y_t}(1-\tilde{g}(\x^t))^{1-y_t}.
$$
\end{lemma}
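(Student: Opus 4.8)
\textbf{Proof proposal for Lemma~\ref{lem3}.}

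The plan is to construct $\tilde{\mathcal{G}}$ by applying the affine squashing map $z\mapsto (z+\alpha)/(1+2\alpha)$ pointwise to every member of $\mathcal{G}$, exactly as in Step~4 of Algorithm~\ref{alg:2}; that is, for each $g\in\mathcal{G}$ set $\tilde g(\x^t) = (g(\x^t)+\alpha)/(1+2\alpha)$. This is a bijection on functions, so $|\tilde{\mathcal{G}}| = |\mathcal{G}|$. Since the map sends $[0,1]$ into $[\alpha/(1+2\alpha),(1+\alpha)/(1+2\alpha)]\subset(0,1)$, the truncated predictions are bounded away from $\{0,1\}$, which is what makes the per-round likelihood ratio controllable.

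Next I would reduce the claim to a single-round estimate. Fix $\x^T,y^T$ and $h\in\mathcal{H}$; by the global sequential $\alpha$-covering property choose $g\in\mathcal{G}$ with $|h(\x_t)-g(\x^t)|\le\alpha$ for all $t\in[T]$, and let $\tilde g$ be its image in $\tilde{\mathcal{G}}$. Because both $p_h$ and $p_{\tilde g}$ are products over $t$, it suffices to show for each $t$ that
\[
\frac{h(\x_t)^{y_t}(1-h(\x_t))^{1-y_t}}{\tilde g(\x^t)^{y_t}(1-\tilde g(\x^t))^{1-y_t}}\le 1+2\alpha,
\]
and then multiply over $t=1,\dots,T$. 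Splitting on $y_t\in\{0,1\}$, this amounts to showing $h(\x_t)\le(1+2\alpha)\,\tilde g(\x^t)$ and $1-h(\x_t)\le(1+2\alpha)\bigl(1-\tilde g(\x^t)\bigr)$. For the first: $(1+2\alpha)\tilde g(\x^t) = g(\x^t)+\alpha \ge h(\x_t)$, using $g(\x^t)\ge h(\x_t)-\alpha$. For the second: $(1+2\alpha)(1-\tilde g(\x^t)) = (1+2\alpha) - (g(\x^t)+\alpha) = 1+\alpha - g(\x^t) \ge 1 - h(\x_t)$, using $g(\x^t)\le h(\x_t)+\alpha$. Both inequalities hold for every $t$ regardless of the value of $y_t$, so the per-round ratio is at most $1+2\alpha$ and the product bound \eqref{eq1-lema} follows.

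There is essentially no hard obstacle here — the lemma is a clean algebraic identity once the right truncation map is written down. The only thing one must be slightly careful about is that the bound $h(\x_t)/\tilde g(\x^t)\le 1+2\alpha$ and its complement must \emph{both} hold simultaneously from the \emph{same} covering inequality $|h(\x_t)-g(\x^t)|\le\alpha$, and that the squashing denominator $1+2\alpha$ is exactly the normalization that makes the two one-sided bounds symmetric; choosing a different affine map (e.g.\ the ``hard'' truncation used in \cite{lugosi-book,rakhlin2015sequential}) would lose a factor and give $2d\log(T/d)$ rather than the sharp constant. So the conceptual content is entirely in the choice of map in Step~4 of Algorithm~\ref{alg:2}; the verification is the short computation above.
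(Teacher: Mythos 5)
Your proof is correct and is essentially the same as the paper's: you use the identical affine truncation $\tilde g = (g+\alpha)/(1+2\alpha)$, reduce to a per-round likelihood-ratio bound, and handle $y_t=0$ and $y_t=1$ by the same two one-sided inequalities from $|h(\x_t)-g(\x^t)|\le\alpha$. The only cosmetic difference is that you state the two cases as explicit product inequalities rather than the paper's fraction-cancellation; the algebra is the same.
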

\begin{proof}[Proof of Theorem 1]
We show that for any $0<\alpha<1$ if an $\alpha$-covering set 
$\mathcal{G}_{\alpha}$ exists, then one can achieve the claimed bound 
for such an $\alpha$. To do so, we run the Smooth truncated Bayesian Algorithm (Algorithm~\ref{alg:2})
on $\mathcal{G}_{\alpha}$ with uniform prior and truncation parameter $\alpha$. We denote by
$\tilde{\mathcal{G}}_{\alpha}$ to be
the truncated class of $\mathcal{G}_{\alpha}$ as in Lemma~\ref{lem3} (same as the step $4$ of Algorithm~\ref{alg:2}).
We now fix $\x^T,y^T$.  By Lemma~\ref{lem2} (with $\mathcal{G}$ being $\tilde{\mathcal{G}}_{\alpha}$), we have
$$
\sum_{t=1}^T\ell(\hat{y}_t,y_t)\le \inf_{\tilde{g}\in \tilde{\mathcal{G}}_{\alpha}}
\sum_{t=1}^T\ell(\tilde{g}(\x^t),y_t)+\log|\tilde{\mathcal{G}}_{\alpha}|=
\inf_{\tilde{g}\in \tilde{\mathcal{G}}_{\alpha}}\sum_{t=1}^T\ell(\tilde{g}(\x^t),y_t)
+\log|\mathcal{G}_{\alpha}|,
$$
the last equality follows from  $|\mathcal{G}_{\alpha}|=|\tilde{\mathcal{G}}_{\alpha}|$. 
Since $\sum_{t=1}^T\ell(f(\x^t),y_t)=-\log p_f(y^T\mid \x^T)$ for any function $f$, then by Lemma~\ref{lem3} we conclude that
$$
 \inf_{h\in \mathcal{H}}\sum_{t=1}^T\ell(h(\x_t),y_t)\ge 
 \inf_{\tilde{g}\in\tilde{\mathcal{G}}_{\alpha}}\sum_{t=1}^T\ell(\tilde{g}(\x^t),y_t)-
 T\log\left(1+2\alpha\right).
 $$
The result follows by combining the inequalities and noticing that 
$\log(1+x)\le x$ for all $x\ge -1$.
\end{proof}

We further note that for any constant $c_1,c_2$ for which the bound 
$r^a_T(\mathcal{H})\le c_1\alpha T+c_2\log |\mathcal{G}_{\alpha}|$ holds universally we must have $c_1\ge 2$ and $c_2\ge 1$. Therefore, our bounds are optimal on
the constants\footnote{Note that Theorem~\ref{th-general} can be easily generalized to alphabet of size $K$ by using truncation $\tilde{g}(y_t\mid \x^t)=\frac{g(y_t\mid \x^t)+\alpha}{1+K\alpha}$. The derived bound will be of form $K\alpha T+\log|\mathcal{G}_{\alpha}|$ . This is tight on the constant $K$ as well.}. To see this, we let 
$\mathcal{X}=[T]$ and define $g$ to be the function that maps every 
$t\in [T]$ to $\frac{1}{2}$. Let $\mathcal{H}$ be the class of functions 
that maps to $[1/2-\alpha,1/2+\alpha]$. Clearly, $\mathcal{H}$ is $\alpha$-covered by $g$.
By noting that the maximum probability 
is $(1/2+\alpha)^T = (1+2\alpha)^T (1/2)^T$, we compute the
Shtarkov sum (\ref{eq-shtarkov}) to get:
$$
r^a_T(\mathcal{H})\ge r^*_T(\mathcal{H})\ge \log (1+2\alpha)^T\sim 2\alpha T,
$$
where $\sim$ holds when $\alpha$ is sufficiently small. 
This implies that we must have $c_1\ge 2 $. The fact that $c_2\ge 1$ is due to that the mixability constant of log-loss is $1$, which also follows from Theorem~\ref{th3} below.

\paragraph{Lipschitz Parametric Class.}
We now consider a Lipschitz parametric function class. Given a function 
$f:\mathcal{W}\times \mathcal{X}\rightarrow [0,1]$, define the following class
$$
\mathcal{H}_f=\{f(\w,\cdot)\in [0,1]^{\mathcal{X}} ~:~ \w\in \mathcal{W}\},
$$
where $\bw \in \mathcal{W}$ is often a $d$-dimensional vector in $\RR^d$.

We will assume that $f(\w,\x)$ is $L$-Lipschitz on $\w$ for every $\x$, where $L\in \mathbb{R}^{+}$. 
More formally, $\forall \w_1,\w_2\in \mathcal{W}$ and $\x\in \mathcal{X}$ we have
$$
|f(\w_1,\x)-f(\w_2,\x)|\le L||\w_1-\w_2||,
$$
where $||\cdot||$ is some norm on $\mathcal{W}$. For example, 
if we take $\mathcal{W}\subset \mathbb{R}^d$ then the norm can be 
$\ell_1$, $\ell_2$ or $\ell_{\infty}$ norm. For any specific norm 
$||\cdot||$, we write $\mathcal{B}(R)$ for the ball under 
such norm with radius $R$ in $\mathcal{W}$. In particular, 
we denote by $\mathcal{B}_s^d(R)$ the ball in $\mathbb{R}^d$ of radius 
$R$ under $\ell_{s}$ norm centered at the origin.

\begin{theorem}
\label{th2}
Let $f:\mathcal{B}_s^d(R)\times \mathbb{R}^d\rightarrow [0,1]$ 
be a $L$-Lipschitz function under $\ell_s$ norm. Then
\begin{equation}
\label{eq-th2}
r^a_T(\mathcal{H}_f)\le \min\left\{ d\log\left(\frac{2RLT}{d}+1\right)+2d, T\right\}.
\end{equation}
\end{theorem}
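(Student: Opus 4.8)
The plan is to derive this from Theorem~\ref{th1}; the only genuine work is to produce a small global sequential $\alpha$-covering of $\mathcal{H}_f$ and then optimize the free parameter $\alpha$. Everything else — the smooth truncation, the $2\alpha T$ slack, and the algorithmic achievability — is already packaged inside Theorem~\ref{th1}.

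First I would build the cover from a metric net of the parameter set. Since $f(\cdot,\x)$ is $L$-Lipschitz in the $\ell_s$ norm uniformly over $\x$, take a maximal $(\alpha/L)$-separated subset $N\subset\mathcal{B}_s^d(R)$. By the standard volume-packing estimate — the balls of $\ell_s$-radius $\alpha/(2L)$ around the points of $N$ are pairwise disjoint and all contained in $\mathcal{B}_s^d(R+\alpha/(2L))$ — one gets $|N|\le (1+2RL/\alpha)^d$, and $N$ is an $(\alpha/L)$-net of $\mathcal{B}_s^d(R)$. For each $\w\in N$ define the sequential function $g_\w(\x^t):=f(\w,\x_t)$ (it simply ignores the history and reads the last coordinate). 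Given any $h=f(\w',\cdot)\in\mathcal{H}_f$ and any $\x^T$, pick $\w\in N$ with $\|\w-\w'\|_s\le\alpha/L$; then $|h(\x_t)-g_\w(\x^t)|=|f(\w',\x_t)-f(\w,\x_t)|\le L\|\w'-\w\|_s\le\alpha$ for every $t\in[T]$. Hence $\mathcal{G}_\alpha:=\{g_\w:\w\in N\}$ is a global sequential $\alpha$-covering of $\mathcal{H}_f$ with $\log|\mathcal{G}_\alpha|\le d\log(1+2RL/\alpha)$.

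Now Theorem~\ref{th1} yields $r^a_T(\mathcal{H}_f)\le\inf_{0<\alpha<1}\{2\alpha T+d\log(1+2RL/\alpha)\}$. Choosing $\alpha=d/T$, which is admissible precisely when $T>d$, gives $2\alpha T=2d$ and $d\log(1+2RL/\alpha)=d\log(1+2RLT/d)$, i.e.\ the first expression in~(\ref{eq-th2}). For the second expression, observe that the single constant function $g\equiv\tfrac12$ is a global sequential $\tfrac12$-covering of \emph{any} $\mathcal{H}\subset[0,1]^{\mathcal{X}}$ (since $|h(\x)-\tfrac12|\le\tfrac12$ always); taking $\alpha=\tfrac12$ in Theorem~\ref{th1} then gives $r^a_T(\mathcal{H}_f)\le T+\log 1=T$. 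This bound in particular covers the regime $T\le d$, where the choice $\alpha=d/T$ is unavailable. Taking the smaller of the two bounds establishes~(\ref{eq-th2}).

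I do not expect a serious obstacle: the hard analytic content is already in Theorem~\ref{th1}, and what remains is essentially a metric-entropy computation plus a one-line optimization. The points needing care are purely bookkeeping: keeping the constant in the packing bound sharp enough (the estimate $(1+2R/\epsilon)^d$, not a cruder $(3R/\epsilon)^d$) so that the factor multiplying $RLT/d$ comes out as exactly $2$; verifying the admissibility constraint $0<\alpha<1$ when substituting $\alpha=d/T$; and noting that a static $\epsilon$-net trivially furnishes a \emph{sequential} cover because the covering functions in Definition~\ref{def1} are allowed to discard the history.
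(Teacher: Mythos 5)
Your proof is correct and follows essentially the same route as the paper's: build an $\alpha$-cover of $\mathcal{H}_f$ from an $(\alpha/L)$-net of $\mathcal{B}_s^d(R)$ with the volume bound $(1+2RL/\alpha)^d$, feed this into Theorem~\ref{th1}, and set $\alpha=d/T$. The only cosmetic difference is that you derive the $T$ bound by taking $\alpha=\tfrac12$ with the trivial one-element cover in Theorem~\ref{th1}, whereas the paper simply notes that always predicting $\tfrac12$ incurs regret at most $T\log 2\le T$; both are equally valid.
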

\begin{proof}
By $L$-Lipschitz condition, to find an $\alpha$-covering in the sense of Definition~\ref{def1}, 
we only need to find a covering of $\mathcal{B}_s^d(R)$ with radius 
$\alpha/L$. By standard result (see e.g. Lemma 5.7 and Example 
5.8 of~\cite{wainwright2019high}) we know that the covering size is upper bounded by
$$\left(\frac{2RL}{\alpha}+1\right)^d.$$
By Theorem~\ref{th1}, we find
$$
r^a_T(\mathcal{H}_f)\le\inf_{0<\alpha<1}\left\{2\alpha T+d\log
\left(\frac{2RL}{\alpha}+1\right)\right\}.
$$
Taking $\alpha=d/T$, we conclude
$$
r^a_T(\mathcal{H}_f)\le d\log\left(\frac{2RLT}{d}+1\right)+2d.
$$
This completes the proof for $T\ge d$. The upper bound $T$ is achieved by predicting $\frac{1}{2}$ every time.
\end{proof}

\begin{example}
{\rm 
For logistic function $f(\w,\x)=(1+e^{-\langle\w,\x\rangle})^{-1},$ 
and  $\w\in\mathcal{B}_2^d(R)$ with  $\x\in \mathcal{B}_2^d(1)$  our result recovers those of~\cite{foster18}, 
but with a better leading constant (the bound in~\cite{foster18} 
has a constant $5$). Note that, the result 
in~\cite{bilodeau2020tight} also provides a sub-optimal constant $c\sim 4$. Moreover, our bounds have a logarithmic dependency on Lipschitz constant $L$.
}
\end{example}

The question arises whether the factor in front of $\log T$ can be improved 
to $d/2$ instead of $d$ as discussed  in some recent papers
\cite{shamir2020logistic,jss20,jss21}. 
In Theorem~\ref{th3} below, we show that, in general, it cannot
unless we further strengthen our assumption (see Theorem~\ref{th4}).
For the ease of presentation, we only consider the parameters restricted to $\ell_2$ norm.
The proof can be found in Appendix~\ref{app:3}.

\begin{theorem}
\label{th3}
For any $d,T,R, L$ such that $T\gg d\log (RLT)$, there exists $L$-Lipschitz
function $f:\mathcal{B}_2^d(R)\times \mathbb{R}^d\rightarrow [0,1]$ such that
\begin{equation}
\label{eq-th3}
r^a_T(\mathcal{H}_f)\ge d\log \left(\frac{RLT}{d}\right) -d\log 64- d\log\log (RLT).
\end{equation}
\end{theorem}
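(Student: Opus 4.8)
The plan is to lower–bound $r^a_T(\mathcal{H}_f)$ through its fixed–design counterpart and a Shtarkov–sum estimate. By Lemma~\ref{th-general}, $r^a_T(\mathcal{H}_f)\ge r^*_T(\mathcal{H}_f)=\sup_{\x^T}\log S_T(\mathcal{H}_f\mid\x^T)$ via~(\ref{eq-shtarkov}), so it suffices to exhibit one $L$-Lipschitz $f:\mathcal{B}_2^d(R)\times\RR^d\to[0,1]$ and one feature sequence $\x^T$ for which $S_T(\mathcal{H}_f\mid\x^T)$ is at least $(RLT/d)^d$ up to the stated multiplicative slack. The mechanism I would use is the standard one: every label sequence $y^T$ that is realized \emph{deterministically} (up to a $1-o(1/T)$ factor) by some expert $h=f(\w,\cdot)\in\mathcal{H}_f$, i.e.\ with $p_h(y^T\mid\x^T)=1-o(1/T)$, contributes $\approx 1$ to $S_T$; hence the whole task reduces to designing $f$ and $\x^T$ admitting at least $(RLT/d)^d/\mathrm{poly}\log(RLT)$ such ``clean'' sequences, and to counting them accurately enough to pin down the constant $\log 64$.

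To decouple the $d$ coordinates I would build $f$ with a block/product structure: partition $[T]$ into $d$ blocks of $m=\lfloor T/d\rfloor$ rounds and choose the features in block $j$ so that $f(\w,\x_t)$ depends only on $w_j$; then $p_\w(y^T\mid\x^T)=\prod_{j=1}^d p_{w_j}(y^{(j)}\mid\x^{(j)})$, so that $S_T(\mathcal{H}_f\mid\x^T)\ge\prod_{j=1}^d S_m^{(j)}$ whenever the $d$ witnessing parameters can be chosen \emph{simultaneously} inside $\mathcal{B}_2^d(R)$. It is therefore enough to (i) construct a single–coordinate gadget whose per–block Shtarkov sum is $\Omega\!\left(RLm/\mathrm{poly}\log(RLT)\right)$, and (ii) arrange that all witnessing values $w_j$ lie in a short interval near the origin, so that $\sum_j w_j^2\le R^2$ is automatically satisfied. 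Taking logarithms and summing over the $d$ blocks then yields $d\log\!\left(RLm/\mathrm{poly}\log(RLT)\right)=d\log(RLT/d)-d\,O(\log\log(RLT))$, and tracking the absolute constants through the covering/spacing choices produces the explicit $d\log 64$; the upper bound $r^a_T(\mathcal{H}_f)\le d\log(2RLT/d+1)+2d$ of Theorem~\ref{th2} confirms that $\Omega(RLm)$ per block is the right order, so nothing is being lost in the decoupling.

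The heart of the argument — and the step I expect to be the main obstacle — is the one–dimensional gadget achieving a block Shtarkov sum of order $RLm$. A gadget $g$ with a single transition (a ramp of width $\Theta(1/L)$) realizes only $O(\min(RL,m))$ deterministic threshold patterns, which is far short; on the other hand a ``comb'' $g$ with $\Theta(RL)$ teeth provides enough potential bit–flips, but its transition regions, having total width $\Theta(R)$ for \emph{each} of the $m$ features, cover essentially all of the parameter interval, so that deterministically–realizable patterns become exponentially rare and cannot number $\Omega(RLm)$. Reconciling these two effects is the delicate part: one must widen the plateaus of $g$ to $\Theta(\log(RLT)/L)$ — this is precisely where the $\log\log(RLT)$ slack enters — and choose the $m$ feature scales (e.g.\ geometrically separated) so that (a) the $\Theta(RLm/\mathrm{poly}\log)$ flip events occur at pairwise distinct parameter values, (b) a positive fraction of the parameter interval remains on \emph{every} plateau, i.e.\ clean sub–intervals exist between consecutive flips, and (c) each flip produces a genuinely new clean pattern, all while keeping the realizing parameters inside the short interval demanded by the $\ell_2$ ball. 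Verifying (a)–(c) quantitatively (a combinatorial/measure estimate on overlapping families of slabs) and then bookkeeping the constants is the real work; granted that, the product lower bound for $S_T$, the reduction through Lemma~\ref{th-general}, and the final $\log$–manipulation are routine.
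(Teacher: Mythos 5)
Your opening reduction — pass to the fixed-design regret via Lemma~\ref{th-general} and~(\ref{eq-shtarkov}) and then construct an $L$-Lipschitz $f$ and a feature sequence $\x^T$ that make the Shtarkov sum large — is exactly the paper's starting point. But the mechanism you propose for making $S_T$ large has two structural problems, and the second one you do not seem to notice.

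The ``near-deterministic sequences'' framing caps you at $d\log(RL)$ and loses the $\log T$ growth. You want on the order of $(RLT/d)^d$ label sequences $y^T$ with $\sup_h p_h(y^T\mid\x^T)=1-o(1)$. That forces $f(\w,\x_t)$ to be within $o(1/T)$ of $\{0,1\}$ at every round, and an $L$-Lipschitz one-dimensional slice of $f$ then spends width $\Theta(1/L)$ on each $0\leftrightarrow 1$ transition; since the total transition budget along a parameter interval of length $2R$ is $O(RL)$, at most $O(RL)$ clean patterns are realizable per coordinate no matter how many features you place. This is precisely the comb-versus-ramp tension you flag, and widening the plateaus does not resolve it — it just trades teeth for width. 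The paper escapes this by doing something qualitatively different: the experts take values in $\{0,\alpha\}$ with $\alpha\approx 16d\log(RLT)/T$, so they are \emph{not} near-deterministic (a typical sequence has about $\alpha T/2 \approx 8d\log(RLT)$ ones), and the ``each source contributes $\approx 1$ to $S_T$'' heuristic is replaced by an identifiability argument: Lemma~\ref{lem4} (a Fano-type inequality) converts distinguishability of the $|M|$ product-Bernoulli sources into the bound $S_T\ge |M|/2$, while Lemma~\ref{lem5} (a Gilbert--Varshamov-style random code) supplies $|M|$ binary patterns of pairwise Hamming distance at least $T/4$ that make the sources pairwise distinguishable at this $\alpha$. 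Choosing $\alpha$ of order $(\log T)/T$ rather than $o(1/T)$ is exactly what recovers the $\log T$ factor, and it is incompatible with the clean-sequence bookkeeping you rely on.

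Separately, the block/product decomposition $S_T\ge\prod_{j}S_m^{(j)}$ loses a factor $d^{d/2}$ against the stated bound. To keep the joint witness $\w$ inside $\mathcal{B}_2^d(R)$ you must confine each coordinate to a half-width $\rho\le R/\sqrt d$, so even a perfect one-dimensional gadget gives per-block Shtarkov sum $\Omega(\rho L m/\mathrm{polylog})=\Omega(RLm/(\sqrt d\,\mathrm{polylog}))$, and summing logarithms yields $d\log(RLT/d^{3/2})-O(d\log\log(RLT))$, which is weaker than $d\log(RLT/d)-d\log 64-d\log\log(RLT)$ by $(d/2)\log d$ — not absorbable in the stated slack once $d$ is polylogarithmically large, and the hypothesis $T\gg d\log(RLT)$ certainly allows such $d$. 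The paper avoids both issues at once by working with a genuine $d$-dimensional $(\alpha/L)$-packing of $\mathcal{B}_2^d(R)$ (size $(RL/2\alpha)^d$, with no cube-in-ball penalty), assigning each packing point a distinct codeword with values in $\{0,\alpha\}$, invoking Lemmas~\ref{lem4}--\ref{lem5}, and finally extending the so-defined $f$ from the packing to the whole ball by McShane's Lipschitz extension theorem.
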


\paragraph{Lipschitz Class with Bounded Hessian.}
As we have demonstrated in Theorem~\ref{th3} the leading constant 
$1$ of the regret for Lipschitz parametric classes can not be improved in general. 
We now show that for some special function $f$ one can improve the 
constant to $\frac{1}{2}$, as already noticed in
\cite{shamir2020logistic,jss20,jss21}. 
For any function $f:\mathbb{R}^d\times \mathbb{R}^d\rightarrow [0,1]$, 
we say the Hessian of $\log f$ is uniformly bounded on $\mathcal{X}\subset \mathbb{R}^d$, 
if there exists constant $C$ such that for any 
$\w\in \mathbb{R}^d$ and $\x\in \mathcal{X}$ and $y\in \{0,1\}$ we have
$$
\sup_{||\textbf{u}||_2\le 1}|\textbf{u}^T\nabla^2_{\w}\log f(\w,\x)^{y}(1-f(\w,\x))^{1-y} 
\textbf{u}|\le C,
$$
where $\nabla^2_{\w}$ is the Hessian at $\w$.
The proof of the next theorem can be found in Appendix~\ref{app:4}.

\begin{theorem}
\label{th4}
Let $f:\mathbb{R}^d\times \mathbb{R}^d\rightarrow [0,1]$ 
be a function such that the Hessian of $\log f$ is uniformly bounded by $C$ on $\mathcal{X}$. 
Let
$$
\mathcal{H}_f=\{f(\w,\x):\w\in\mathcal{W},\x\in\mathcal{X}\}
$$
be such a class of $f$ restricted to some compact set $\mathcal{W}\subset\mathbb{R}^d$. Then
\begin{equation}
\label{eq-th4}
r^a_T(\mathcal{H}_f)\le\log\frac{\text{Vol}(\mathcal{W}^*)}{\text{Vol}(\mathcal{B}_2^d(\sqrt{d/CT}))}+d/2+\log 2.
\end{equation}
where $\mathcal{W}^*=\{\w+\textbf{u}\mid \w\in \mathcal{W},~\textbf{u}\in \mathcal{B}_2^d(\sqrt{d/CT})\}$, $\text{Vol}(\cdot)$ is volume under Lebesgue measure. In particular, for $\mathcal{W}=\mathcal{B}_2^d(R)$, we have
$$r^a_T(\mathcal{H}_f)\le \frac{d}{2}\log\left(\frac{2CR^2T}{d}+2\right)+d/2+\log 2.$$
\end{theorem}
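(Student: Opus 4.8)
\textbf{Proof plan for Theorem~\ref{th4}.} The plan is to run the \emph{untruncated} Bayesian predictor of Algorithm~\ref{alg:1} on the parametric family $\{f(\w,\cdot):\w\in\mathcal{W}^*\}$ (which contains $\mathcal{H}_f$), with prior $\mu$ equal to Lebesgue measure restricted to the enlarged set $\mathcal{W}^*$. By Lemma~\ref{lem1}, for every $\x^T,y^T$ this predictor satisfies
\[
\sum_{t=1}^T\ell(\hat y_t,y_t)\le \log\text{Vol}(\mathcal{W}^*)-\log\int_{\mathcal{W}^*}p_\w(y^T\mid\x^T)\,\mathrm{d}\mu ,
\]
so the whole task reduces to lower bounding the Bayesian mixture $\int_{\mathcal{W}^*}p_\w(y^T\mid\x^T)\,\mathrm{d}\mu$ in terms of the best parametric likelihood $\exp(-\inf_{h\in\mathcal{H}_f}\sum_t\ell(h(\x_t),y_t))$.

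For this I would use a Laplace-type argument localized at a best parameter. Fix $\x^T,y^T$, write $G(\w):=-\log p_\w(y^T\mid\x^T)=\sum_{t=1}^T\ell(f(\w,\x_t),y_t)$, let $\w^\star\in\mathcal{W}$ attain $\inf_{\w\in\mathcal{W}}G(\w)$ (it does by compactness of $\mathcal{W}$; otherwise take a near-minimizer and pass to the limit), and set $\rho:=\sqrt{d/(CT)}$. Since $\w^\star\in\mathcal{W}$, the ball $\w^\star+\mathcal{B}_2^d(\rho)$ is contained in $\mathcal{W}^*$, so it suffices to integrate over that ball. The uniform operator-norm hypothesis gives $\nabla^2_\w\,\log f(\w,\x)^{y}(1-f(\w,\x))^{1-y}\preceq C\,I$ for all $\w\in\mathbb{R}^d$, hence $\nabla^2 G(\w)\preceq CT\cdot I$ everywhere, which yields the second-order Taylor bound
\[
G(\w^\star+\textbf{u})\le G(\w^\star)+\langle\nabla G(\w^\star),\textbf{u}\rangle+\tfrac{CT}{2}\|\textbf{u}\|_2^2,\qquad \textbf{u}\in\mathcal{B}_2^d(\rho).
\]
Exponentiating and integrating, $\int_{\mathcal{W}^*}p_\w\,\mathrm{d}\mu\ge e^{-G(\w^\star)}\int_{\mathcal{B}_2^d(\rho)}\exp\!\big(-\langle\nabla G(\w^\star),\textbf{u}\rangle-\tfrac{CT}{2}\|\textbf{u}\|_2^2\big)\,\mathrm{d}\textbf{u}$. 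The linear term is handled by the symmetry of $\mathcal{B}_2^d(\rho)$ under $\textbf{u}\mapsto-\textbf{u}$: on the half-ball where $\langle\nabla G(\w^\star),\textbf{u}\rangle\le 0$ the linear exponential is $\ge 1$, which costs a factor $\tfrac12$ (this is the source of the $\log 2$), while on the ball $\tfrac{CT}{2}\|\textbf{u}\|_2^2\le\tfrac{CT}{2}\rho^2=\tfrac d2$. Thus $\int_{\mathcal{W}^*}p_\w\,\mathrm{d}\mu\ge \tfrac12 e^{-G(\w^\star)}e^{-d/2}\,\text{Vol}(\mathcal{B}_2^d(\rho))$, and combining with the display above (using $G(\w^\star)=\inf_{h\in\mathcal{H}_f}\sum_t\ell(h(\x_t),y_t)$) gives, after subtracting the benchmark and taking $\sup_{\x^T,y^T}$, the bound (\ref{eq-th4}).

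For the special case $\mathcal{W}=\mathcal{B}_2^d(R)$ one has $\mathcal{W}^*=\mathcal{B}_2^d(R+\rho)$, so the volume ratio is $((R+\rho)/\rho)^d=(R\sqrt{CT/d}+1)^d$, and $d\log(R\sqrt{CT/d}+1)=\tfrac d2\log\big((R\sqrt{CT/d}+1)^2\big)\le\tfrac d2\log\big(2(CR^2T/d+1)\big)$ by the elementary inequality $2a\le a^2+1$ with $a=R\sqrt{CT/d}$; this is exactly $\tfrac d2\log(2CR^2T/d+2)$, yielding the stated in-particular form. The main obstacle I anticipate is that $\w^\star$ may lie on the boundary of $\mathcal{W}$, so $\nabla G(\w^\star)\ne 0$; the symmetrization over the centered ball circumvents this without any interior-point or convexity argument, the only structural input being that the \emph{global} Hessian hypothesis controls the Taylor remainder uniformly (in particular it presupposes $\log f^{y}(1-f)^{1-y}$ is $C^2$ in $\w$), and the price of this symmetrization is precisely the $\log 2$ term (a $\cosh\ge 1$ estimate would remove it).
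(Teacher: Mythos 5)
Your proposal is correct and follows essentially the same route as the paper's proof in Appendix~\ref{app:4}: run the untruncated Bayesian predictor (Algorithm~\ref{alg:1}) with Lebesgue prior on the enlarged set $\mathcal{W}^*$, invoke Lemma~\ref{lem1}, and lower bound the mixture integral by Taylor-expanding the log-likelihood around a best parameter $\w^\star$ and integrating over the half of the ball $\w^\star+\mathcal{B}_2^d(\sqrt{d/CT})$ on which the linear term is favorable; this is exactly the paper's half-ball $B$, and the $\log 2$ and $d/2$ arise the same way in both. One small sign slip: since $G(\w)=-\log p_\w(y^T\mid\x^T)$, what you need from the hypothesis is $\nabla^2_{\w}\log f(\w,\x)^{y}(1-f(\w,\x))^{1-y}\succeq -C I$ (so that $\nabla^2 G\preceq CT\cdot I$), not $\preceq CI$ as you wrote; this is harmless because the hypothesis bounds the absolute value and hence gives both sides, but the direction you cite is the one you do not use.
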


Note that, Theorem~\ref{th4} subsumes the results of~\cite{kn05, shamir2020logistic}\footnote{To get the upper bounds in~\cite{shamir2020logistic} one only needs to estimate the volume of $\ell_s$ balls, which is well known~\cite{wang2005volumes}.}, where the authors considered the function of form $f(\langle\w,\x\rangle)$ and requires that the second derivative of $\log f$ is bounded, see also~\cite[Chapter 11.10]{lugosi-book}. However, the KL-divergence-based argument of~\cite{kn05} can not be used directly in the setup of Theorem~\ref{th4} since we \emph{do not} assume the function $f$ has a linear structure. Our main proof technique of Theorem~\ref{th4} is a direct application of Lemma~\ref{lem1} and an estimation of the integrals via Taylor expansion; see Appendix~\ref{app:4} for more details on the proof.



Finally, we complete this part with the following lower bound for generalized linear functions under unit $\ell_s$ balls. See Appendix~\ref{app:5} for proof.

\begin{theorem}
\label{th5}
Let $f:\mathbb{R}\rightarrow [0,1]$ be an arbitrary function such that 
there exists $c_1,c_2\in (0,1)$ and for all $r>0$ we have  
$[c_1-c_2d^{-r},c_1+c_2d^{-r}]\subset f([-d^{-r},d^{-r}])$ for all sufficient large $d$. 
Let
$$
\mathcal{H}_f=\{f(\langle \w,\x\rangle):\w\in \mathcal{B}_s^d(1),\x\in \mathcal{B}_s^d(1)\}
$$
where $s>0$. Then
\begin{equation}
\label{eq-th5}
r^a_T(\mathcal{H}_f)\ge \frac{d}{2}\log\left(\frac{T}{d^{(s+2)/s}}\right)-O(d)
\end{equation}
where $O$ hides some absolute constant that is independent of $d,T$.
\end{theorem}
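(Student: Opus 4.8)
The plan is to prove the lower bound through the fixed-design regret. By Lemma~\ref{th-general} and the Shtarkov-sum identity~\eqref{eq-shtarkov}, it suffices to exhibit one feature sequence $\bx^T$ for which $\log S_T(\mathcal{H}_f\mid\bx^T)\ge \tfrac d2\log\bigl(T/d^{(s+2)/s}\bigr)-O(d)$. The idea is to choose $\bx^T$ so that the Shtarkov sum factorizes into a product of $d$ independent one-parameter ``Bernoulli source'' Shtarkov sums, each of which classically has logarithm $\tfrac12\log$ of its effective sample size.

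Concretely, I would split $[T]$ into $d$ blocks $I_1,\dots,I_d$ of size $n:=\lfloor T/d\rfloor$ (absorbing the fewer than $d$ leftover rounds into $I_1$), and on block $I_i$ take $\x_t=\bfe_i$, the $i$-th standard unit vector, which lies in $\mathcal{B}_s^d(1)$. Restricting the supremum in $S_T$ to the parameter box $B:=\{\w:\ |w_i|\le d^{-1/s}\ \forall i\}$ only decreases $S_T$; since $\sum_i|w_i|^s\le d\cdot d^{-1}=1$ we have $B\subseteq\mathcal{B}_s^d(1)$, and on $B$ we have $\langle\w,\x_t\rangle=w_i$ for $t\in I_i$. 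Because $B$ is a product set and the blocks are disjoint, summing over $y^T$ and maximising over $\w\in B$ both factor over coordinates, giving
\[
S_T(\mathcal{H}_f\mid\bx^T)\ \ge\ \prod_{i=1}^{d}\ \biggl(\ \sum_{z\in\{0,1\}^{n}}\ \sup_{|w|\le d^{-1/s}}\ \prod_{j=1}^{n} f(w)^{z_j}\bigl(1-f(w)\bigr)^{1-z_j}\ \biggr).
\]

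For a single block, instantiate the hypothesis with the fixed exponent $r=1/s>0$: for all sufficiently large $d$ the interval $J:=[c_1-c_2 d^{-1/s},\,c_1+c_2 d^{-1/s}]$ lies inside $f([-d^{-1/s},d^{-1/s}])$, so each block factor is at least $\sum_{z}\sup_{\theta\in J}\theta^{\sum_j z_j}(1-\theta)^{n-\sum_j z_j}\ge\sum_{k:\,k/n\in J}\binom{n}{k}(k/n)^{k}(1-k/n)^{n-k}$, where one plugs the unconstrained maximiser $\theta=k/n$ into the supremum whenever it lands in $J$. For large $d$ the (shrinking) interval $J$ sits in a fixed compact subinterval of $(0,1)$ around $c_1$, so Stirling's formula gives $\binom{n}{k}(k/n)^{k}(1-k/n)^{n-k}\asymp n^{-1/2}$ uniformly over the admissible $k$, while the number of integers $k$ with $k/n\in J$ is at least $c_2\,n\,d^{-1/s}$ once $n d^{-1/s}\ge 2/c_2$. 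Hence each block factor is $\Omega\bigl(\sqrt n\,d^{-1/s}\bigr)=\Omega\bigl(\sqrt{T/d}\cdot d^{-1/s}\bigr)=\Omega\bigl(\sqrt{T/d^{(s+2)/s}}\bigr)$, and multiplying the $d$ blocks and taking logarithms yields $\log S_T(\mathcal{H}_f\mid\bx^T)\ge \tfrac d2\log\bigl(T/d^{(s+2)/s}\bigr)-O(d)$; combined with $r^a_T(\mathcal{H}_f)\ge r^*_T(\mathcal{H}_f)=\sup_{\bx^T}\log S_T(\mathcal{H}_f\mid\bx^T)$ this is the claimed bound. The assumption $T\gg d^{(s+2)/s}$ is used exactly here, to guarantee $n d^{-1/s}=T d^{-(s+1)/s}\to\infty$ so that each block contains a diverging number of admissible empirical frequencies $k/n\in J$ and the leading $\tfrac d2\log(\cdot)$ term dominates the $O(d)$ slack.

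The conceptual content is light: localise near $0$, use unit vectors to reduce to $d$ independent biased coins, and observe that the $\ell_s$-ball constraint forces each coordinate's admissible window to have width $\Theta(d^{-1/s})$, so a block of $T/d$ samples only resolves its parameter to precision $\sqrt{T/d}\cdot d^{-1/s}=\sqrt{T/d^{(s+2)/s}}$ instead of $\sqrt{T/d}$. The work is bookkeeping: (i) checking the product box sits inside the $\ell_s$ ball so the supremum genuinely decouples over coordinates (its corners lie on the unit sphere, so this uses the ball being closed, or one shrinks the box by a fixed factor); (ii) a uniform version of Stirling's estimate $\binom{n}{k}(k/n)^{k}(1-k/n)^{n-k}\asymp n^{-1/2}$ over $k/n\in J$ and over $d$, which is where one uses $|J|\to0$ with $J$ centred at $c_1\in(0,1)$; and (iii) tracking constants so the accumulated slack over the $d$ blocks stays $O(d)$. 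I expect the uniform Stirling estimate and the constant-tracking to be the main, though routine, obstacle.
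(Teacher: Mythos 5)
Your proposal is correct and takes essentially the same route as the paper's proof: choose $\x_t=\bfe_i$ on $d$ blocks of length $T/d$, shrink the parameter set to the product box $[-d^{-1/s},d^{-1/s}]^d\subset\mathcal{B}_s^d(1)$ so the Shtarkov sum factorizes, instantiate the image hypothesis with $r=1/s$, and estimate each single-coordinate Bernoulli Shtarkov sum by Stirling as $\Theta(\sqrt{n}\,d^{-1/s})$. The paper packages the per-block estimate as its Lemma 6 (which proves exactly your $\Omega(\sqrt{T/d^{2r+1}})$ claim for a width-$\Theta(d^{-r})$ interval around $c_1$), but the content is identical, down to the observation that the uniform Stirling bound needs $J$ to sit in a fixed compact subinterval of $(0,1)$.
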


Note that for the logistic function $f(x)=(1+e^{-x})^{-1}$ Theorem~\ref{th5}
holds with $c_1=\frac{1}{2}$ and $c_2=\frac{1}{5}$. Therefore,\\ 
{\bf 1}. If $s=1$, then $$r^a_T(\mathcal{H}_f)\ge 
\frac{d}{2}\log\left(\frac{T}{d^3}\right)-O(d).$$
{\bf 2}.    If $s=2$, then $$r^a_T(\mathcal{H}_f)\ge 
\frac{d}{2}\log\left(\frac{T}{d^2}\right)-O(d).  $$
{\bf 3}.      If $s=\infty$, then $$r^a_T(\mathcal{H}_f)\ge 
\frac{d}{2}\log\left(\frac{T}{d}\right)-O(d).$$
This recovers all the lower bounds from ~\cite{shamir2020logistic}. We note that a simple sufficient condition for Theorem~\ref{th5} to hold is to require $f'(0)\not=0$ if $f(x)$ is differentiable.

\paragraph{Large Growth.}
We now present some results for large $d$ growing even faster than $T$. We will show that the size of \emph{global} sequential covering (Definition~\ref{def1}) of a class $\mathcal{H}$ can be bounded by the sequential fat-shattering number of $\mathcal{H}$ in a similar fashion as in~\cite{rakhlin2010online}. We first introduce the notion of sequential fat-shattering number as in~\cite{rakhlin2010online}.

We denote $\{0,1\}^{d}_*$ to be the set of all binary sequences of
length less than or equal to $d$. A binary tree of depth $d$ with labels in 
$\mathcal{X}$ is defined to be a map $\tau:\{0,1\}^d_*\rightarrow \mathcal{X}$. 
For any function class $\mathcal{H}\subset [0,1]^{\mathcal{X}}$, we say $\mathcal{H}$ 
$\alpha$-fat shatters tree $\tau$ if there exists 
$[0,1]$-value tree $\textbf{s}:\{0,1\}^d_*\rightarrow [0,1]$ such that for any binary
sequence $\epsilon_1^d\in \{0,1\}^d_*$ there exist $h\in \mathcal{H}$ such that for all 
$t\in [d]$:\\
1. If $\epsilon_t=0$, then
    $h(\tau(\epsilon_1^{t-1}))\le \textbf{s}(\epsilon_1^{t-1})-\alpha;$
\\
2.    If $\epsilon_t=1$, then
    $h(\tau(\epsilon_1^{t-1}))\ge \textbf{s}(\epsilon_1^{t-1})+\alpha.$

\begin{definition}
\label{def2}
The sequential $\alpha$-fat shattering number of $\mathcal{H}$ is defined to be the 
maximum number $d (\alpha)$ such that $\mathcal{H}$ $\alpha$-fat 
shatters a tree $\tau$ of depth $d:=d(\alpha)$.
\end{definition}

In the below lemma, we present an upper bound for the cardinally of the global covering set with algorithmically constructed cover set $\mathcal{G}_\alpha$
(see also ~\cite[Section 6.1]{rakhlin2010online}). Appendix~\ref{app:6} presents the  proof.


\begin{lemma}
\label{th6}
Let $\mathcal{H}$ be any class map $\mathcal{X}\rightarrow [0,1]$ 
and $d(\alpha)$ be the sequential $\alpha$-fat shattering number of $\mathcal{H}$. 
Then there exists a global sequential $\alpha$-covering set 
$\mathcal{G}_{\alpha}$ of $\mathcal{H}$ as in Definition~\ref{def1} such that
\begin{equation}
\label{eq-th6}
|\mathcal{G}_{\alpha}|\le \sum_{t=0}^{d(\alpha/3)}
\binom{T}{t}\left\lceil\frac{3}{2\alpha}\right\rceil^t\le 
\left\lceil\frac{3T}{2\alpha}\right\rceil^{d(\alpha/3)+1}.
\end{equation}
\end{lemma}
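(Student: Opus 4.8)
The plan is to mimic the construction in~\cite[Section 6.1]{rakhlin2010online} but to produce cover functions that depend only on the observed prefix $\x^t$ (and not on a tree), so that they qualify as a \emph{global} sequential cover in the sense of Definition~\ref{def1}. First I would discretize the value range: partition $[0,1]$ into $N := \lceil \frac{3}{2\alpha}\rceil$ sub-intervals of length $\le 2\alpha/3$, with centers $v_1,\dots,v_N$, so that any value in $[0,1]$ is within $\alpha/3$ of some $v_j$. A candidate cover function will be specified by a small amount of data: a set $S\subseteq [T]$ of at most $d(\alpha/3)$ ``switch'' time indices, together with, for each index in $S$, a choice of one of the $N$ centers; at all other times the function outputs the ``default'' center inherited from the most recent switch (or a fixed default before the first switch). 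Concretely, given $\x^t$, the cover function $g$ reads off from its description whether $t\in S$ and which center to emit; this is a legitimate map $\mathcal{X}^*\to[0,1]$ because it only inspects the time coordinate. The number of such functions is $\sum_{k=0}^{d(\alpha/3)}\binom{T}{k} N^k \le \lceil \frac{3T}{2\alpha}\rceil^{d(\alpha/3)+1}$, which is exactly the claimed bound, so the counting step is immediate once the construction is fixed.

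The substance is showing this family is in fact an $\alpha$-cover. Fix $\x^T\in\mathcal{X}^T$ and $h\in\mathcal{H}$; I want a function $g$ in the family with $|h(\x_t)-g(\x^t)|\le\alpha$ for all $t\in[T]$. Run a greedy/sequential procedure down the sequence $\x_1,\dots,\x_T$: maintain a ``current center'' $v$; at time $t$, if the current center already satisfies $|h(\x_t)-v|\le\alpha$, do nothing; otherwise declare $t$ a switch time and update the current center to the $v_j$ closest to $h(\x_t)$, which then satisfies $|h(\x_t)-v_j|\le\alpha/3\le\alpha$. By construction the resulting $g$ is $\alpha$-close to $h$ along $\x^T$. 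It remains to bound the number of switch times by $d(\alpha/3)$, and this is where the fat-shattering number enters. The key observation is that each switch forces a genuine ``$\alpha/3$-fat'' distinction: at a switch time $t$ the value $h(\x_t)$ differs from the previously committed center by more than $\alpha$, hence differs by more than $2\alpha/3$ from any value within $\alpha/3$ of that center, i.e.\ from any value the procedure could have been tracking. Arranging the $\x$'s visited at consecutive switch times into a branch of a binary tree, with thresholds placed midway (at distance $\alpha/3$ on either side) between the two committed centers, one reconstructs exactly the pattern of inequalities in the definition of $\alpha/3$-fat shattering. If there were more than $d(\alpha/3)$ switches one could thereby exhibit an $(\alpha/3)$-fat-shattered tree of depth exceeding $d(\alpha/3)$, contradicting Definition~\ref{def2}. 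Hence the number of switches is at most $d(\alpha/3)$, so the $g$ we built lies in the family, proving the first inequality in \eqref{eq-th6}; the second inequality is the elementary bound $\sum_{k=0}^{m}\binom{T}{k}M^k\le (TM+1)^m\cdot\!$ type estimate, here simplified to $\lceil\frac{3T}{2\alpha}\rceil^{d(\alpha/3)+1}$.

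The main obstacle I anticipate is the switch-counting argument: one has to be careful that the tree built from the switch points is a \emph{valid} depth-$d$ tree with a single $[0,1]$-valued witness function $\mathbf{s}$ per node and that the same $h$ witnesses the whole branch, since fat shattering quantifies the witness function before the adversary picks the sign pattern. The cleanest way around this is to note that we only need the contrapositive — if \emph{every} $h\in\mathcal{H}$ can be tracked with at most $d(\alpha/3)$ switches on every $\x^T$, we are done — and to argue that a branch of length $>d(\alpha/3)$ along which switches occur directly yields, after collecting the relevant $h$'s and midpoints, a tree that is $(\alpha/3)$-fat shattered; the fact that different sign patterns may require different $h$'s is exactly what the definition allows. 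A secondary minor point is handling the ``before the first switch'' default and the ceiling in $N$ cleanly so the cardinality bound comes out with the stated constants; these are routine.
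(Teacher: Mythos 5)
There are two interlocking problems here, both serious.

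First, your cover functions depend only on the time index $t=|\x^t|$, not on the observed features: you say explicitly that $g$ ``only inspects the time coordinate.'' That family is too weak. Take $\mathcal{X}=[T]$, $\x_t=t$, and $\mathcal{H}=\{h\}$ a singleton where $h$ alternates between $0$ and $1$ along $[T]$. A singleton class has sequential $\beta$-fat-shattering number $0$ for every $\beta>0$ (two opposite sign patterns cannot both be witnessed by the same $h$), so $d(\alpha/3)=0$ and the lemma asserts a covering set of size $1$ exists --- and indeed $g(\x^t):=h(\x_t)$ is such a cover. But in your family, a function with $0$ switches is a constant, and no constant $\alpha$-covers an alternating $h$ when $\alpha<1/2$. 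The cover functions must be allowed to read $\x_t$ (and the prefix), as those produced by the paper's M-SOA construction do.

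Second, the switch-counting argument is unsound even if you let $g$ depend on $\x^t$. A single greedy trajectory of a single $h$ over a single $\x^T$ gives you one branch, not a tree: sequential fat-shattering requires a fixed $\mathcal{X}$-valued tree $\tau$ together with a witness tree $\textbf{s}$ such that all $2^m$ sign patterns are realized, each by some $h\in\mathcal{H}$, and you have no way to populate the off-branch nodes of $\tau$ or to supply the other witnesses. The same singleton example shows the implication ``many switches $\Rightarrow$ large fat-shattering'' is simply false: the alternating $h$ forces $\sim T$ switches, yet the shattering number is $0$. The paper avoids this by a potential-function argument via M-SOA (Lemma~\ref{lem8}): the predictor at each step chooses the discretized value $k$ that \emph{maximizes} $\textbf{FAT}_1(\mathcal{H}^*_{(\x_t,k)})$ over the current consistent subclass $\mathcal{H}^*$, and the key step (a Sauer--Shelah-style concatenation of two shattered subtrees under a new root labeled $\x_t$) shows that each ``big mistake'' strictly decreases $\textbf{FAT}_1(\mathcal{H}^*)$, giving the mistake bound $d$. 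That is the idea your greedy tracker is missing. The counting formula and the discretize-then-lift outline (which matches the paper's Lemmas~\ref{lem7} and~\ref{lem9}) are fine; it is the core ``few switches'' step that has to be replaced by the M-SOA potential argument.
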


\begin{example}
\label{ex2}
{\rm 
By~\cite{rakhlin2010online} we know that the sequential $\alpha$-fat shattering 
number of linear functions $f(\w,\x)=|\langle\w,\x\rangle|$ with 
$\w,\x\in \mathcal{B}_2^d(1)$ is of order $\Tilde{O}(\alpha^{-2})$
where in $\Tilde{O}$ we hide a polylog factor.
Lemma~\ref{th6} implies that the global sequential $\alpha$-covering number is upper bounded by
$$\left\lceil \frac{(3T)}{(2\alpha)}\right\rceil^{d(\alpha/3)+1}.$$
By Theorem~\ref{th1}, we have
$$
r^a_T(\mathcal{H}_f)\le \inf_{0<\alpha<1}\left\{2\alpha T+
\Tilde{O}\left(\frac{1}{\alpha^2}\right)\right\}\le
\Tilde{O}(T^{2/3}),$$
by taking $\alpha=T^{-1/3}$. This bound is \emph{independent} of the data dimension $d$.}
\end{example}

\begin{remark}
Observe that for any class $\mathcal{H}$ with sequential fat-shattering number of order $\alpha^{-s}$ one can achieve a regret upper bound of order $\tilde{O}(T^{s/s+1})$ by Theorem~\ref{th1}. We refer to~\cite{rakhlin2010online, rakhlin2015martingale} for the estimations of sequential fat-shattering number of a variety of classes.
\end{remark}

Finally, we present the following general lower bound. See Appendix~\ref{app:7} for proof.

\begin{theorem}
\label{th7}
For any $s\ge 1$, we define
$$\mathcal{D}_s=\left\{\textbf{p}\in [0,1]^T:\sum_{t=1}^Tp_t^s\le 1\right\}.$$ 
We can view the vectors in $\mathcal{D}_s$ as functions mapping $[T]\rightarrow [0,1]$. 
Then
\begin{equation}
\label{eq-th7}
r^a_T(\mathcal{D}_s)\ge r^*_T(\mathcal{D}_s)\ge \Omega(T^{s/s+1}).
\end{equation}
\end{theorem}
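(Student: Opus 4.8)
The plan is to lower-bound the fixed-design minimax regret $r^*_T(\mathcal{D}_s)$ via the Shtarkov sum identity $r^*_T(\mathcal{D}_s)=\log S_T(\mathcal{D}_s)$, where since the feature space is $[T]$ and each "expert" $\mathbf{p}$ simply assigns $h(t)=p_t$, we have
$$
S_T(\mathcal{D}_s)=\sum_{y^T\in\{0,1\}^T}\sup_{\mathbf{p}\in\mathcal{D}_s}\prod_{t=1}^T p_t^{y_t}(1-p_t)^{1-y_t}.
$$
The first step is to restrict attention to label sequences $y^T$ with a fixed, small number of ones, say $\|y^T\|_1=k$. For such a sequence, the inner supremum is essentially determined by the coordinates where $y_t=1$: ignoring the $(1-p_t)$ factors (which only help, being $\le 1$ but close to $1$ when $p_t$ is small), we want to maximize $\prod_{t:y_t=1}p_t$ subject to $\sum_{t=1}^T p_t^s\le 1$. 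By symmetry and concavity (Lagrange multipliers / AM–GM on $p_t^s$), the optimum over the $k$ active coordinates is $p_t=k^{-1/s}$, giving a contribution of roughly $k^{-k/s}$ (up to the near-one product of the $1-p_t$ factors, which I will control by choosing $k$ so that $k^{-1/s}\cdot k = k^{1-1/s}$ stays bounded or by bounding $\prod(1-p_t)\ge (1-k^{-1/s})^T$ and checking this is $\Omega(1)$ for the chosen $k$).

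The second step is to sum over all such sequences: there are $\binom{T}{k}$ of them, so
$$
S_T(\mathcal{D}_s)\ge \binom{T}{k}\cdot k^{-k/s}\cdot c^T
$$
for a constant $c=c(k,T)$ close to $1$. Taking logarithms and using $\binom{T}{k}\ge (T/k)^k$ gives
$$
\log S_T(\mathcal{D}_s)\ge k\log(T/k)-\tfrac{k}{s}\log k - O(T(1-c)) = k\log T - \left(1+\tfrac{1}{s}\right)k\log k - O(\cdots).
$$
The third step is to optimize over $k$: the dominant balance is $k\log T$ against $(1+1/s)k\log k$, and also we must ensure the "$c^T$ is $\Omega(1)$" constraint, i.e. $T k^{-1/s}=O(1)$, i.e. $k=\Theta(T^s)$ — but since $k\le T$ and we are in the regime $d\ge T$ (so $T^s\ge T$ for $s\ge1$), the active constraint is instead $k\le T$ together with making $k^{-1/s}$ small enough that $(1-k^{-1/s})^T$ does not decay. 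Choosing $k=T^{s/(s+1)}$ makes $Tk^{-1/s}=T\cdot T^{-1/(s+1)}=T^{s/(s+1)}$, which is not bounded — so instead I should be more careful: bound $\prod_{t:y_t=1}(1-p_t)\ge (1-k^{-1/s})^k\ge e^{-2k^{1-1/s}}=e^{-2k^{s/(s+1)\cdot\ldots}}$ hmm. The cleaner route, which I expect to actually use, is to only require $p_t=k^{-1/s}$ on the $k$ active coordinates and $p_t=0$ elsewhere, so $\prod_{t=1}^T(1-p_t)=(1-k^{-1/s})^k$, and then pick $k$ so this is $\Omega(1)$, i.e. $k^{1-1/s}=O(1)$. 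For $s>1$ this forces $k=O(1)$, which is too weak, so the real fix is to accept a loss of $e^{-k^{1-1/s}}$ and optimize $k\log(T/k)-\tfrac{k}{s}\log k-k^{1-1/s}$ over $k$; with $k=\Theta(T^{s/(s+1)})$ the first term is $\Theta(T^{s/(s+1)}\log T)$ which dominates the penalty $k^{1-1/s}=T^{(s-1)/(s+1)}=o(k)$, yielding $\log S_T(\mathcal{D}_s)=\Omega(T^{s/(s+1)})$ as claimed.

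The main obstacle I anticipate is precisely this bookkeeping of the $(1-p_t)$ factors: unlike in classical Shtarkov-sum computations for finite alphabets, here the "source class" is a constraint set on a $T$-dimensional probability-like vector, and the naive optimum puts non-negligible mass on the active coordinates, so the product $\prod(1-p_t)$ is not automatically $1-o(1)$. Handling this cleanly — either by a two-regime argument (active vs. inactive coordinates) or by absorbing an $e^{-\Theta(k^{1-1/s})}$ factor and verifying it is lower-order after the optimization $k=\Theta(T^{s/(s+1)})$ — is the crux. Everything else (the AM–GM/Lagrange optimization of the inner supremum, the $\binom{T}{k}\ge(T/k)^k$ estimate, and invoking $r^a_T\ge r^*_T=\log S_T$ from Lemma~\ref{th-general} and equation~\eqref{eq-shtarkov}) is routine.
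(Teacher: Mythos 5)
Your strategy is the same as the paper's: lower-bound the Shtarkov sum $S_T(\mathcal{D}_s)=\sum_{y^T}\sup_{\mathbf{p}\in\mathcal{D}_s}\prod_t p_t^{y_t}(1-p_t)^{1-y_t}$ by grouping sequences by weight $k=\|y^T\|_1$, taking $p_i=k^{-1/s}$ on the $k$ active coordinates, summing $\binom{T}{k}$ over supports, and optimizing $k$. The trouble is that the entire ``obstacle'' you devote most of the proposal to is a phantom.

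The $(1-p_t)$ factors you are worried about simply do not occur where you think they do. For an active coordinate (where $y_t=1$) the contribution to $p(y^T)$ is $p_t^{1}(1-p_t)^{0}=p_t$; no $(1-p_t)$ appears. The only $(1-p_t)$ factors occur on inactive coordinates ($y_t=0$), and since you already set $p_t=0$ there, each such factor is exactly $1$. Consequently $\sup_{\mathbf{p}\in\mathcal{D}_s}p(y^T)=k^{-k/s}$ with \emph{equality} and no correction term. (The paper establishes this with a short perturbation argument showing the inactive $p_j$ must vanish at the optimum, then AM--GM on the active block.) The quantity $\prod_{t:y_t=1}(1-p_t)$ that you bound by $(1-k^{-1/s})^k\ge e^{-2k^{1-1/s}}$ is not part of the likelihood at all, so the penalty you then carry through the optimization does not exist.

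There is also an arithmetic slip that you would hit if you finished the calculation: with $k=T^{s/(s+1)}$ the two $\log$-terms in $k\log T-(1+\tfrac1s)k\log k$ cancel at leading order, because $(1+\tfrac1s)\cdot\tfrac{s}{s+1}=1$. So the claim that ``the first term is $\Theta(T^{s/(s+1)}\log T)$ which dominates'' is misleading; what survives is only $\Theta(T^{s/(s+1)})$, and its sign depends on the constant multiplying $T^{s/(s+1)}$ in $k$. The paper takes $k=T^{s/(s+1)}/e$ precisely for this reason, yielding $\binom{T}{k}k^{-k/s}\ge (T/k)^k k^{-k/s}=e^{\frac{s+1}{se}T^{s/(s+1)}}$. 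Once you drop the phantom penalty and pick the constant in $k$ correctly, your argument coincides with the paper's.
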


To see why Theorem~\ref{th7} implies a lower bound
for $f(\bw,\bx)=|\la \bw, \bx\ra|$ with $d\ge T$, as in Example~\ref{ex2}, 
we take $\w,\x\in \mathcal{B}_2^T(1)$ (i.e., with $d=T$) 
and define $\x_t={\bf e}_t$ with ${\bf e}_t$ being the standard base of 
$\mathbb{R}^T$ that takes value $1$ at position
 $t$ and zeros otherwise. Note that the functions of 
$\mathcal{H}_f$ with $f(\w,\x)=|\langle\w,\x\rangle|$ restricted on 
$\x^T$ is exactly $\mathcal{D}_2$. Then
$$
r^a_T(\mathcal{H}_f)\ge r^*_T(\mathcal{H}_f)\ge r^*_T(\mathcal{D}_2)\ge \Omega(T^{2/3})
$$
and this is a matching lower bound of Example~\ref{ex2}. Note that, it is proved in~\cite{rakhlin2015sequential} that for function $f(\w,\x)=\frac{\langle\w,\x\rangle+1}{2}$, one can achieve the regret of form $\tilde{O}(\sqrt{T})$\footnote{A $\tilde{\Omega}(\sqrt{T})$ lower bound for $d\ge \sqrt{T}$ can be derived from Theorem~\ref{th5}, recovering~\cite[Lemma 8]{rakhlin2015sequential}.}. Example~\ref{ex2} implies that the generalized linear functions of form $f(\langle\w,\x\rangle)$ can have different regrets with polynomial gap even with a simple shift on the value (though they have the same covering number). It is therefore an interesting open problem to investigate a tighter complexity measure (instead of a covering number as in Definition~\ref{def1}) that captures this phenomenon.

\section{Conclusion}

In this paper, we presented best known lower and upper bounds on
sequential online regret
for a large class of experts.
We accomplish it by designing a new smooth truncated Bayesian algorithm,
together with a new concept of global sequential covering,
that achieves these upper bounds.
For the lower bounds, we use an novel information-theoretic approach based on the Shtarkov sum. We anticipate that these techniques could be generalized to a broader set of problems, e.g., when the covariant $\x^T$ present stochastically. We leave it to the future works.


\bibliographystyle{plain}
\bibliography{nips22}



\newpage

\appendix

\section{Proofs of Lemma~\ref{lem1} and Lemma~\ref{lem2}}
\label{app:1}

We prove here Lemma~\ref{lem1} and Lemma~\ref{lem2}.
For the reader's convenience we repeat both lemmas.

{\bf Lemma 2}
{\it 
Let $\mathcal{G}$ be a collection of functions $g_w: \mathcal{X}^*\rightarrow [0,1], w\in \mathcal{W}$. Let $\hat{y}_t$ be the  Bayesian 
prediction rule as in Step 4 of  Algorithm~\ref{alg:1} with prior $\mu$. Then, for any $\x^T$ and $y^T$ we have
$$
\sum_{t=1}^T\ell(\hat{y}_t,y_t)\le -\log \frac{\int_{\mathcal{W}} p_w(y^T\mid \x^T)\text{d}\mu}{\int_{\mathcal{W}} 1 
\text{d}\mu},
$$
where $p_w(y^T\mid \x^T)=e^{-\sum_{t=1}^T\ell(g_w(\x^t),y_t)}$ and $\ell$ is the log-loss.
}
\begin{proof}
We first observe that for any $y\in \{0,1\}$ we have $e^{-\ell(\cdot, y)}$ is
concave over $[0,1]$. Let
$$
\lambda_{t-1}(w)=\frac{p_w(y^{t-1}\mid \x^{t-1})}{\int_{\mathcal{W}} p_w(y^{t-1}\mid \x^{t-1})\text{d}\mu}.
$$
Note that $\lambda_{t-1}(w)$ forms a probability density over $\mathcal{W}$ w.r.t. $\mu$.
By definition of $\hat{y}_t$, we have
$\hat{y}_t=\mathbb{E}_{\lambda_{t-1}}[g_w(\x^t)],$
where the expectation is over the density of $\lambda_{t-1}(w)$.
Therefore, by Jensen's inequality and the update procedure as in item 6 of Algorithm~\ref{alg:1},  we have
$$e^{-\ell(\hat{y}_t,y_t)}=e^{-\ell(\mathbb{E}[g_w(\x^t)],y_t)}\ge
\mathbb{E}[e^{-\ell(g_w(\x^t),y_t)}] =\frac{\int_{\mathcal{W}} p_w(y^t\mid \x^t)\text{d}\mu}
{\int_{\mathcal{W}} p_w(y^{t-1}\mid \x^{t-1})\text{d}\mu}.
$$
By telescoping the sum, we find
$$
e^{-\sum_{t=1}^T\ell(\hat{y}_t,y_t)}\ge \frac{\int_{\mathcal{W}} p_w(y^T\mid \x^t)\text{d}\mu}{\int_{\mathcal{W}} 1
\text{d}\mu}.
$$
This implies
$$
\sum_{t=1}^T\ell(\hat{y}_t,y_t)\le -\log\frac{\int_{\mathcal{W}} p_w(y^T\mid \x^T)\text{d}\mu}{\int_{\mathcal{W}} 1
\text{d}\mu}$$
and completes the proof.
\end{proof}

\noi
{\bf Lemma 3}
{\it
For any finite class  of experts $\mathcal{G}$
$$r^a_T(\mathcal{G})\le \log |\mathcal{G}|.$$
}
\begin{proof}
Let $\mu(w)=\frac{1}{|\mathcal{W}|}$ as in Lemma~\ref{lem1} and $\hat{y}_t$ be the Bayesian predictor with input $\mathcal{G}$ and $\mu$.  Then
\begin{align}
    \sum_{t=1}^T\ell(\hat{y}_t,y_t)&\le -\log\frac{\int_{\mathcal{W}}
p_w(y^T\mid\x^T)\text{d}\mu}{\int_{\mathcal{W}} 1 \text{d}\mu}\\
    &=-\log \int_{\mathcal{W}} p_w(y^T\mid \x^T)\text{d}\mu + \log 1\\
    &=-\log \int_{\mathcal{W}} p_w(y^T\mid \x^T)\text{d}\mu\\
    &\le -\log p_{w^*}(y^T\mid \x^T)+\log|\mathcal{W}|,~\text{where }
w^*\text{ maximizes }p_w(y^T\mid \x^T)\\
    &=\sum_{t=1}^T\ell(g_{w^*}(\x^t),y_t)+\log|\mathcal{G}|,~
\text{ since }|\mathcal{W}|=|\mathcal{G} |.
\end{align}
This concludes the proof.
\end{proof}

\section{Proof of Lemma~\ref{lem3}}
\label{app:2}

We construct the set $\tilde{\mathcal{G}}$ as in Algorithm~\ref{alg:2}. For any 
$g\in \mathcal{G}$ we define a smooth truncated function
$\tilde{g}$ such that for any $\x^t\in \mathcal{X}^*$
$$\tilde{g}(\x^t)=\frac{g(\x^t)+\alpha}{1+2\alpha}.$$
We introduce the following 
short-hand notation, for any function $f$ we define
$$f(y_t)=f(\x^t)^{y_t}(1-f(\x^t))^{1-y_t}.$$

For any $\x^T,y^T$ and $h\in \mathcal{H}$, let $g\in \mathcal{G}$ 
be a $\alpha$-covering of $h$ and $\tilde{g}$ be the truncated function as 
defined above. For any $t$, we consider two cases.

\textbf{Case 1}: If $y_t=1$, we have:
\begin{align}
    \frac{h(y_t)}{\tilde{g}(y_t)}&=\frac{h(\x^t)}{\tilde{g}(\x^t)},~\text{since } y_t=1\\
    &\le \frac{g(\x^t)+\alpha}{\tilde{g}(\x^t)},~g\text{ is }\alpha\text{ cover of }h\\
    &= \frac{g(\x^t)+\alpha}{(g(\x^t)+\alpha)/(1+2\alpha)},~\text{ definition of }\tilde{g}\\
    &=1+2\alpha
\end{align}

\textbf{Case 2}: If $y_t=0$, we have
\begin{align}
    \frac{h(y_t)}{\tilde{g}(y_t)}&=\frac{1-h(\x^t)}{1-\tilde{g}(\x^t)}\\
    &\le \frac{1-g(\x^t)+\alpha}{1-\tilde{g}(\x^t)},~g\text{ is }\alpha\text{ cover of }h\\
&=\frac{1-g(\x^t)+\alpha}{1-(g(\x^t)+\alpha)/(1+2\alpha)}~\text{, definition of }\tilde{g}\\
    &=\frac{1-g(\x^t)+\alpha}{(1-g(\x^t)+\alpha)/(1+2\alpha)}\\
    &= 1+2\alpha,
\end{align}

Now,  combining the two cases, we have
\begin{align}
\frac{p_h(y^T\mid \x^T)}{p_{\tilde{g}}(y^T\mid \x^T)}&=\prod_{t=1}^T\frac{h(y_t)}{\tilde{g}(y_t)}\\
&\le \left(1+2\alpha\right)^T.
\end{align}
This completes the proof of Lemma~\ref{lem3}.

\section{Proof of Theorem~\ref{th3}}
\label{app:3}

We need the following two lemmas.

\begin{lemma}
\label{lem4}
Let $\mathcal{P}$ be a finite class of distributions over the same domain $\mathcal{X}$. Denote
$$S=\sum_{x\in \mathcal{X}}\max_{p\in \mathcal{P}}p(x)$$
be the Shtarkov sum. Then for any estimation rule 
$\Phi:\mathcal{X}\rightarrow \mathcal{P}$ we have
$$\max_{p\in \mathcal{P}}p(\Phi(x)\not=p)\ge 1-\frac{S}{|\mathcal{P}|}.$$
\end{lemma}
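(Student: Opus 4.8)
The plan is to reduce the maximum error probability to an average over the class $\mathcal{P}$ and then recognize that the total ``success mass'' is controlled by the Shtarkov sum. First I would use the elementary bound that a maximum is at least the average: for any rule $\Phi$,
\[
\max_{p\in \mathcal{P}} p(\Phi(x)\not= p) \;\ge\; \frac{1}{|\mathcal{P}|}\sum_{p\in \mathcal{P}} p(\Phi(x)\not= p) \;=\; 1 - \frac{1}{|\mathcal{P}|}\sum_{p\in \mathcal{P}} p(\Phi(x)= p),
\]
where $p(\Phi(x)=p)$ denotes $\Pr_{x\sim p}[\Phi(x)=p]$. So it suffices to show $\sum_{p\in\mathcal{P}} p(\Phi(x)=p)\le S$.

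The key step is to swap the order of summation. Writing $\Phi^{-1}(p)=\{x\in\mathcal{X}:\Phi(x)=p\}$, we have
\[
\sum_{p\in\mathcal{P}} p(\Phi(x)=p) \;=\; \sum_{p\in\mathcal{P}}\sum_{x\in\Phi^{-1}(p)} p(x) \;=\; \sum_{x\in\mathcal{X}} p_{\Phi(x)}(x),
\]
where $p_{\Phi(x)}$ is the (unique) distribution in $\mathcal{P}$ that $\Phi$ assigns to $x$; the last equality holds because the sets $\Phi^{-1}(p)$ partition $\mathcal{X}$. Since $p_{\Phi(x)}\in\mathcal{P}$, we trivially have $p_{\Phi(x)}(x)\le \max_{p\in\mathcal{P}} p(x)$ for every $x$, hence
\[
\sum_{x\in\mathcal{X}} p_{\Phi(x)}(x) \;\le\; \sum_{x\in\mathcal{X}} \max_{p\in\mathcal{P}} p(x) \;=\; S.
\]
Combining the two displays gives $\max_{p\in\mathcal{P}} p(\Phi(x)\not= p)\ge 1 - S/|\mathcal{P}|$, which is the claim.

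There is essentially no hard step here — the only thing to be careful about is the bookkeeping when exchanging the double sum (ensuring the $\Phi$-preimages genuinely partition $\mathcal{X}$, which they do since $\Phi$ is a function), and being explicit that $p(\Phi(x)\neq p)$ means the error probability under $p$. If one wanted $\Phi$ to be randomized, the same argument goes through by taking expectations over $\Phi$'s internal randomness and applying the bound pointwise, but for the deterministic statement as written the above suffices.
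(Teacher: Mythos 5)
Your proof is correct and takes essentially the same route as the paper: both proofs rest on the observation that $\sum_{p}\Pr_p[\Phi(x)=p] = \sum_{x} p_{\Phi(x)}(x) \le S$ via the partition $\{\Phi^{-1}(p)\}_p$ of $\mathcal{X}$, and then finish by a trivial averaging step (the paper phrases it as $\min_p p(\mathcal{X}_p)\le S/|\mathcal{P}|$ and takes complements, whereas you phrase it as $\max \ge \text{average}$ of the error probabilities — the same thing).
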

\begin{proof}
Note that $\Phi$ partitions $\mathcal{X}$ into $|\mathcal{P}|$ disjoint parts. 
For any $p\in \mathcal{P}$, we denote $\mathcal{X}_p=\{x\in \mathcal{X}:\Phi(x)=p\}$ 
be the partition corresponding to $p$.  We have
$$\sum_{p\in \mathcal{P}}p(\mathcal{X}_p)= \sum_{x\in \mathcal{X}}p_x(x)\le 
\sum_{x\in \mathcal{X}}\max_{p\in \mathcal{P}}p(x)=S,$$
where $p_x\in \mathcal{P}$ is the distribution such that $x\in \mathcal{X}_{p_{x}}$. 
This implies $$\min_{p\in \mathcal{P}}p(\mathcal{X}_p)\le\frac{S}{|\mathcal{P}|}.$$
The result follows by taking the complements of $\mathcal{X}_p$.
\end{proof}

\begin{lemma}
\label{lem5}
For any $M$ and $T\gg \log M$, there exist $M$ vectors 
$v_1,v_2,\cdots,v_M\in \{0,1\}^T$ such that for any $i\not=j\in [M]$ we have
$$\sum_{t=1}^T1\{v_i[t]\not=v_j[t]\}\ge T/4.$$
\end{lemma}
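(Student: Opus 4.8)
The plan is to prove this by the probabilistic method, exactly in the spirit of the Gilbert--Varshamov bound. I would draw $v_1,\dots,v_M$ independently and uniformly at random from $\{0,1\}^T$ and show that with strictly positive probability every one of the $\binom{M}{2}$ pairwise Hamming distances exceeds $T/4$; any realization witnessing this event gives the desired family of vectors.

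First I would fix a pair $i\neq j$. Since the $T$ coordinates are independent and each $v_i[t],v_j[t]$ is uniform, the quantity $D_{ij}:=\sum_{t=1}^T \mathbf{1}\{v_i[t]\neq v_j[t]\}$ is a sum of $T$ i.i.d.\ $\mathrm{Bernoulli}(1/2)$ random variables, so $\mathbb{E}[D_{ij}]=T/2$. By Hoeffding's inequality,
$$\Pr\!\left[D_{ij}\le T/4\right]=\Pr\!\left[D_{ij}-\mathbb{E}[D_{ij}]\le -T/4\right]\le \exp\!\left(-\frac{2(T/4)^2}{T}\right)=e^{-T/8}.$$
A union bound over all pairs then gives $\Pr[\exists\, i\neq j:\ D_{ij}\le T/4]\le \binom{M}{2}e^{-T/8}\le \tfrac{M^2}{2}e^{-T/8}$, which is strictly less than $1$ whenever $T>16\log M$, i.e.\ precisely under the hypothesis $T\gg \log M$ with an explicit absolute constant. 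Hence such $v_1,\dots,v_M$ exist with $D_{ij}> T/4\ge T/4$ for all $i\neq j$.

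I do not expect a genuine obstacle here; the only care needed is to match the informal hypothesis ``$T\gg\log M$'' to the concrete threshold $T\ge 16\log M$ produced by the union bound (and, if a sharper constant were wanted, to replace Hoeffding by the exact Chernoff tail for $\mathrm{Bin}(T,1/2)$). As an alternative I could give the deterministic greedy version of the same argument: a Hamming ball of radius $T/4$ in $\{0,1\}^T$ contains at most $\sum_{k\le T/4}\binom{T}{k}\le 2^{H(1/4)T}$ points, where $H(1/4)<1$ is the binary entropy, so repeatedly selecting a point outside the union of radius-$T/4$ balls around the previously chosen points produces at least $2^{(1-H(1/4))T}$ vectors, which exceeds $M$ as soon as $T\gg\log M$. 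Either route yields the lemma, and I would include the short probabilistic-method proof in the appendix.
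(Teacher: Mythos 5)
Your proof is correct and takes essentially the same route as the paper: both draw the vectors independently and uniformly at random, note that the expected pairwise Hamming distance is $T/2$, apply a Chernoff/Hoeffding tail bound, and conclude via a union bound over the $\binom{M}{2}$ pairs that the good event has positive probability when $T\gg\log M$. Your write-up simply makes explicit the constants (e.g., $T\ge 16\log M$) that the paper leaves implicit.
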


\begin{proof}
We use a probabilistic argument to construct the vectors $v_i$. 
To do so, we choose each $v_i$ independently from uniform distribution 
over $\{0,1\}^T$. Now, it easy to see that the expected distinct coordinates 
of any two vectors is $T/2$. The result follows by a simple application of 
Chernoff bound  (e.g., see \cite{spa-book})
by showing that the event of the lemma has a positive probability.
\end{proof}

Now we are in the position to prove Theorem~\ref{th3}.
Let $\x_1,\cdots,\x_T\in \mathbb{R}^d$ be any distinct points. 
We will construct a $L$-Lipschitz function $f(\w,\x)$ such that the regret 
restricted only on $\x^T$ is large. To do so, we consider a maximum packing 
$M$ of the parameter space $\mathcal{B}_2^d(R)$ of radius 
$\alpha/L>0$ (where $\alpha$ is to be determined latter). 
Standard volume argument (see Chapter 5 of~\cite{wainwright2019high}) yields that
$$|M|\ge \left(\frac{LR}{2\alpha}\right)^d.$$

Now, we will define a $L$-Lipschitz functions $f(\w,\x)$ only on 
$\w\in M$ and $\x\in \{\x_1,\cdots,\x_T\}$. By Lemma~\ref{lem5} (assume for now the 
conditions are satisfied), we can find  $|M|$ binary vectors $V\subset \{0,1\}^T$ 
such that any pair of the vectors has Hamming distance lower bounded by $T/4$. For
 each of the vector $v\in V$, we define a vector $u\in [0,1]^T$ in 
the following way, for all $t\in [ T]$
\begin{itemize}
    \item[1.] If $v[t]=0$ then set $u[t]=0$;
    \item[2.] If $v[t]=1$ then set $u[t]=\alpha$.
\end{itemize}

Denote by  $U$ be the set of all such vectors $u$. Note that $|U|=M$.
For any $\w\in M$, we can associate a unique 
$u\in U$ such that for all $t\in [T]$ 
$$f(\w,\x_t)=u[t].$$

We now show that $f$ is indeed $L$-Lipschitz restricted on $M$ 
for all $\x_t\in\{\x_1,\cdots,\x_T\}$.  This is because for any 
$\w_1\not=\w_2\in M$ we have $|f(\w_1,\x_t)-f(\w_2,\x_t)|\le \alpha$ by definition 
of $U$ and $||\w_1-\w_2||_2\ge \alpha/L$ since $M$ is a packing.

We now view the vectors in $u\in U$ as a product of Bernoulli distributions with 
each coordinate $t$ independently sampled from $\mathrm{Bern}(u[t])$. 
We show that the sources in $U$ are identifiable. To see this, we note that for 
any distinct pairs $u_1,u_2\in U$, there exist a set $I\in [T]$ such that $u_1$ 
and $u_2$ differ on $I$ and $|I|\ge T/4$. This further implies that 
there exist a set $J\subset I$ with $|J|\ge T/8$ such that $u_1$ takes all 
$0$ on $J$ and $u_2$ takes all $\alpha$ on $J$ (or vice versa). We can then 
distinguish $u_1,u_2$ by checking if the samples on $J$ are all $0$s or not. The 
probability of making error is upper bounded by
$$(1-\alpha)^{T/8}\le e^{-\alpha T/8}.$$
Since there are only $|M|^2$ such pairs, we have the probability 
of wrongly identifying the source  upper bounded by
$$|M|^2e^{-\alpha T/8}.$$
Taking $\alpha = \frac{16d\log (RLT)}{T}$, the error probability is upper bounded by
$$\left(\frac{RLT}{32d\log (RLT)}\right)^{2d}e^{-2d\log(RLT)}\le 
\left(\frac{1}{32d\log(RLT)}\right)^{2d}\le \frac{1}{2},$$
for sufficient large $d,T$, where we have use the fact that 
$|M|\le (\frac{RLT} {32d\log(RLT)})^d$. Note that we only showed a 
lower bound on $|M|$ before, but this is not a problem
 since we can always remove some points from $M$ to make the upper bound holds as well.

By Lemma~\ref{lem4}, we know that the Shtarkov sum of sources in 
$U$ is lower bounded by $|M|/2$. Therefore, we have
$$r^a_T(\mathcal{H}_f)\ge r^*_T(\mathcal{H}_f)\ge \log(|M|/2)\ge d\log 
\left(RLT/d\right) - d\log 64-
d\log\log (RLT).
$$

Now, we have to extend the function to the whole set $\mathcal{B}_2^d(R)$ 
and keep the $L$-Lipschitz property. This follows from a classical result 
in real analysis (see~\cite[Theorem 1]{mcshane1934extension}) by 
defining for all $\w\in \mathcal{B}_2^d(R)$ and $\x_t\in \{\x_1,\cdots,\x_T\}$
$$f(\w,\x_t)=\sup_{\w'\in M}\{f(\w',\x_t)-L||\w-\w'||_2\}.$$
For the $\x\not\in \{\x_1,\cdots,\x_T\}$, we can simply let $f(\w,\x)=0$ for all $\w$.

Finally, we need to check that the condition of Lemma~\ref{lem5} holds for our 
choice of $\alpha$, this is satisfied by our assumption $T\gg d\log (RLT)$.

\section{Proof of Theorem~\ref{th4}}
\label{app:4}

To make the proof more transparent, we only prove the case for $\mathcal{W}=\mathcal{B}_2^d(R)$ since the proof for other compact $\mathcal{W}$ follows similar path. Note that, for $\mathcal{W}=\mathcal{B}_2^d(R)$, we have $\mathcal{W}^*=\mathcal{B}_2^d(R+\sqrt{d/CT})$.

The proof resembles that of~\cite{foster18} but running the Bayesian predictor (Algorithm~\ref{alg:1}) over 
$\mathcal{W}^*$ instead of $\mathcal{W}$ with $\mathcal{G}$ being $\mathcal{H}_f$ and $\mu$ being Lebesgue measure. 
Let $\x^T$, $y^T$ and $\hat{y}^T$ be the feature, label and 
predictions of the Bayesian predictor respectively. By Lemma~\ref{lem1}
\begin{align}
    \sum_{t=1}^T\ell(\hat{y}_t,y_t)\le -\log \frac{\int_{\mathcal{B}_2^d(R+\sqrt{d/CT})}
p_{\w}(y^T\mid \x^T)\text{d}\mu}{\int_{\mathcal{B}_2^d(R+\sqrt{d/CT})}1\text{d}\mu},
\end{align}
where $\mu$ is the Lebesgue measure and
$$p_{\w}(y^T\mid \x^T)=\prod_{t=1}^Tf(\w,\x_t)^{y_t}(1-f(\w,\x_t))^{1-y_t}.$$

We now write $h_t(\w)\overset{\text{def}}{=}\log f(\w,\x_t)^{y_t}(1-f(\w,\x_t))^{1-y_t}$ to simplify notation. 
It is easy to see that $\ell(f(\w,\x_t),y_t)=-h_t(\w)$.

Let $\w^*$ be the point in $\mathcal{B}_2^d(R)$ that maximizes
$$h(\w)\overset{\text{def}}{=}\sum_{t=1}^Th_t(\w).$$
Let $\textbf{u}=\nabla h(\w^*)$ be the gradient of $h$ at $\w^*$. 
By Taylor theorem, we have for any $\w\in \mathcal{B}_2^d(R+\sqrt{d/CT})$
$$h(\w)=h(\w^*)+\textbf{u}^T(\w-\w^*)+\frac{1}{2}(\w-\w^*)^\tau\nabla^2_{\w'}h(\w')(\w-\w^*),$$
where $\w'$ is a convex combination of $\w$ and $\w^*$ and $\textbf{u}^\tau$ is the transpose of $\textbf{u}$.

Now, the key observation is that for any point $\w$ such that $\textbf{u}^\tau(\w-\w^*)\ge 0$ we have
\begin{equation}
\label{thm4eq1}
    h(\w)\ge h(\w^*)+\frac{1}{2}(\w-\w^*)^\tau\nabla^2_{\w'}h(\w')(\w-\w^*)\ge h(\w^*)-\frac{1}{2}C T ||\w-\w^*||_2^2,
\end{equation}
where the last inequality follows from our assumption about the bounded Hessian of $\log f$. 
Let $B$ be the half ball of radius $\sqrt{d/CT}$ centered at $\w^*$ such 
that for all $\w\in B$ we have $\textbf{u}^T(\w-\w^*) \ge 0$. 
By~(\ref{thm4eq1}), for all $\w\in B$
\begin{equation}
\label{thm4eq2}
    h(\w)\ge h(\w^*)-\frac{1}{2}CT(\sqrt{d/CT})^2=h(\w^*)-d/2.
\end{equation}

Note that $B\subset \mathcal{B}_2^d(R+\sqrt{d/CT})$. Then using above observations we arrive at
\begin{align}
        \sum_{t=1}^T\ell(\hat{y}_t,y_t)&\le -\log \frac{\int_{\mathcal{B}_2^d(R+\sqrt{d/CT})}
p_{\w}(y^T\mid \x^T)\text{d}\mu}{\int_{\mathcal{B}_2^d(R+\sqrt{d/CT})}1\text{d}\mu}\\
        &\le-\log\frac{\int_{B}p_{\w}(y^T\mid \x^T)\text{d}\mu}{\int_{\mathcal{B}_2^d(R+
\sqrt{d/CT})}1\text{d}\mu}\\
        &\le -\log\frac{e^{-d/2}\int_{B}p_{\w^*}(y^T\mid \x^T)\text{d}\mu}{\int_{\mathcal{B}_2^d
(R+\sqrt{d/CT})}1\text{d}\mu}\\
        &=-\log p_{\w^*}(y^T\mid \x^T) + d/2- \log\frac{\text{Vol}(B)}{\text{Vol}
(\mathcal{B}_2^d(R+\sqrt{d/CT}))}\\
        &=-\log p_{\w^*}(y^T\mid \x^T) + d/2-\log \frac{\frac{1}{2}\sqrt{\frac{d}{CT}}^d}
{(R+\sqrt{d/CT})^d}\\
        &\le -\log p_{\w^*}(y^T\mid \x^T) + d/2+\frac{d}{2}\log\left(\frac{2CR^2T}{d}+2\right)+\log 2\\
        &=\sum_{t=1}^{T}\ell(f(\w^*,\x_t),y_t) + 
\frac{d}{2}\log\left(\frac{2CR^2T}{d}+2\right)+d/2+\log 2.
\end{align}
This completes the proof of Theorem~\ref{th4}.
\begin{remark}
When compared to the technique in~\cite{yamanishi1998minimax}, Theorem~\ref{th4} does not assume that the gradient critical point of the loss is zero (e.g., the minimum  may occur on the boundary). This is why we need to restrict to the half ball $B$ in order to discard the linear term of Taylor expansion in Equation~(\ref{thm4eq2}). Moreover, in the proof we work directly on the continuous space instead of a discretized cover, giving an efficient algorithm provided the posterior is efficiently samplable (by e.g., assuming some log-concavity of $f$ as in~\cite{foster18}).
\end{remark}


\section{Proof of Theorem~\ref{th5}}
\label{app:5}

We start with the following technical lemma.

\newcommand{\bfy}{\textbf{y}}

\begin{lemma}
\label{lem6}
The following inequality holds, for $r>0$:
\begin{equation}\label{eq:Pf}
   \sum_{\bfy\in \{0,1\}^{T/d}}\sup_{w\in[c_1-c_2d^{-r},c_1+c_2d^{-r}]}P(\textbf{y}\mid w)
\ge \Omega(\sqrt{T/d^{2r+1}}),
\end{equation}
where $P(\textbf{y}\mid w)=w^{k}(1-w)^{T/d-k}$ with $k$ being the number of $1$s in $\textbf{y}$.
\end{lemma}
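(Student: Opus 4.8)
The plan is to lower bound the Shtarkov sum $\sum_{\bfy}\sup_{w} P(\bfy\mid w)$ for the family of Bernoulli product measures on $\{0,1\}^{n}$ with $n := T/d$ and parameter $w$ restricted to the interval $I := [c_1 - c_2 d^{-r}, c_1 + c_2 d^{-r}]$. The natural approach is the classical one used for the exact (unrestricted) Bernoulli Shtarkov sum: group the $2^n$ sequences $\bfy$ by their number of ones $k$, so that
\[
\sum_{\bfy\in\{0,1\}^n}\sup_{w\in I} P(\bfy\mid w) = \sum_{k=0}^{n}\binom{n}{k}\sup_{w\in I} w^k(1-w)^{n-k}.
\]
For each $k$ the unconstrained maximizer of $w^k(1-w)^{n-k}$ is $w = k/n$, so whenever $k/n \in I$ we have $\sup_{w\in I} w^k(1-w)^{n-k} = (k/n)^k(1-k/n)^{n-k}$, and restricting the sum to those $k$ with $k/n\in I$ already gives a valid lower bound. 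Thus
\[
\sum_{\bfy}\sup_{w\in I}P(\bfy\mid w)\ \ge\ \sum_{k:\,k/n\in I}\binom{n}{k}\Bigl(\frac{k}{n}\Bigr)^k\Bigl(1-\frac{k}{n}\Bigr)^{n-k}.
\]

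Next I would estimate each surviving term. By Stirling's formula, $\binom{n}{k}(k/n)^k(1-k/n)^{n-k} = \Theta\bigl(1/\sqrt{n\,p(1-p)}\bigr)$ uniformly for $p := k/n$ bounded away from $0$ and $1$ — and since $I$ is a fixed-width-shrinking interval around the constant $c_1\in(0,1)$, for all large $d$ every $p = k/n\in I$ satisfies $c_1/2 \le p \le (1+c_1)/2$, so $p(1-p) = \Theta(1)$. Hence each term in the restricted sum is $\Theta(1/\sqrt{n})$. It remains to count the integers $k$ with $k/n\in I$: the interval $I$ has length $2c_2 d^{-r}$, so the number of such $k$ is $\Theta(n\cdot d^{-r}) = \Theta(T d^{-r-1})$, provided this quantity is $\gtrsim 1$ (which holds for the regime $T\gg d^{(s+2)/s}$ under consideration, since then $T/d^{r+1}$ can be taken large by choosing $r$ appropriately — and in any case one can always ensure at least one integer $k$ with $k/n$ in the middle third of $I$). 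Multiplying the count by the per-term size gives
\[
\sum_{\bfy}\sup_{w\in I}P(\bfy\mid w)\ \ge\ \Theta\!\Bigl(\frac{T}{d^{r+1}}\Bigr)\cdot\Theta\!\Bigl(\frac{1}{\sqrt{n}}\Bigr) = \Theta\!\Bigl(\frac{T}{d^{r+1}}\cdot\sqrt{\frac{d}{T}}\Bigr) = \Theta\!\Bigl(\sqrt{\frac{T}{d^{2r+1}}}\Bigr),
\]
which is exactly the claimed bound $\Omega(\sqrt{T/d^{2r+1}})$.

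The main obstacle I anticipate is the careful handling of boundary and integrality issues: one must make sure that (i) $n = T/d$ is treated as an integer (or round it, absorbing the rounding into the constants), (ii) there genuinely exist integers $k$ with $k/n \in I$ — this needs $n\cdot 2c_2 d^{-r}\gtrsim 1$, i.e., a lower bound on $T/d^{r+1}$, which should be arranged to be compatible with the hypotheses of Theorem~\ref{th5} — and (iii) the Stirling estimate $\binom{n}{k}p^k(1-p)^{n-k} = \Theta(1/\sqrt{np(1-p)})$ is applied only for $p$ uniformly bounded away from $0,1$, which I would justify by noting $I\subset[c_1/2,(1+c_1)/2]$ for large $d$. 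None of these is deep, but they are where the absolute constants hidden in $\Omega(\cdot)$ get pinned down, so the write-up needs to be explicit about them. Everything else is a routine Stirling computation.
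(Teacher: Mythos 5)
Your proposal is correct and follows essentially the same route as the paper: restrict the Shtarkov sum to the $\Theta(T/d^{r+1})$ values of $k$ with $k/n \in [c_1 - c_2 d^{-r}, c_1 + c_2 d^{-r}]$ (so the constrained supremum is achieved at the unconstrained maximizer $w=k/n$), apply Stirling to bound each such term by $\Theta(1/\sqrt{n})$ using $k/n$ bounded away from $\{0,1\}$, and multiply. The integrality and regime caveats you flag are real but minor, and the paper handles them in the same implicit way you suggest.
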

\begin{proof}
By Stirling approximation, for all $k\in [T/d]$, there exists a 
constant $C\in \mathbb{R}^+$ such that
\begin{align*}
B(k,T/d)&\overset{\text{def}}{=}\binom{T/d}{k}\left(\frac{k}{T/d}\right)^k
\left(1-\frac{k}{T/d}\right)^{T/d-k}\\
&\ge C\sqrt{\frac{T/d}{k(T/d-k)}}.
\end{align*}
Since $P(\textbf{y}\mid w)$ achieves maximum at $w=k*d/T$, we have
$$ \sum_{\bfy\in \{0,1\}^{T/d}}\sup_{w\in[c_1-c_2d^{-r},c_1+c_2d^{-r}]}p(\textbf{y}\mid w)\ge
\sum_{k=c_1T/d-c_2T/d^{r+1}}^{c_1T/d+c_2T/d^{r+1}}B(k,T/d).
$$
Therefore, for each $k$ in the above summation, we have that
$$
\frac{1}{\sqrt{k(T/d-k)}}\ge \sqrt{(c_1+c_2d^{-r})(1-c_1-c_2d^{-r})}d/T.
$$
Therefore, the LHS of \eqref{eq:Pf} is lower bounded by
$$
C\sqrt{(c_1+c_2d^{-r})(1-c_1-c_2d^{-r})}\sqrt{\frac{T}{d}}\frac{2c_2}{d^{r}}=
\Omega(\sqrt{T/d^{2r+1}})
$$
for sufficient large $d$.
\end{proof}

Now we are ready to prove Theorem~\ref{th5}.
We choose a particular $\x^T$: 
We split the $\x^T$ into $d$ blocks each with length of $T/d$. With that, 
the $i$th part of the inputs and the outputs are denoted by
$\x^{(i)}=(\x_{(T/d)*(i-1)+1},\cdots,\x_{(T/d)*i})$ and $\bfy^{(i)}
=(y_{(T/d)*(i-1)+1},\cdots,y_{(T/d)*i})$, respectively. 
We define for any $\x_t$ in the $i$th block $\x^{(i)}$ equals ${\bf e}_i$ the standard
$d$ base of $\mathbb{R}^d$ with $1$ in position $i$ and $0$s otherwise. Note that, with these choice of $\x_t$s, we have $\langle \w,\x_t\rangle = w_i$, where $w_i$ is the $i$th coordinate of $\w$ and $\x_t\in \x^{(i)}$.

We will lower bound $r^*_T(\mathcal{H}_f\mid \x^T)$, which will automatically 
give a lower bound on $r^a_T(\mathcal{H}_f)$. We only need to compute the 
following Shtarkov sum
\begin{equation}
S_T(\mathcal{H}_{f} | \x^T)=\sum_{y^T\in \{0,1\}^T}\sup_{\w\in \mathcal{B}_{s}^{d}(1)}\prod_{i=1}^{d}
P_f(\bfy^{(i)}|w_i),
\end{equation}
where $P_f(\bfy^{(i)}|w_i)=f(w_i)^{k_i}(1-f(w_i))^{T/d-k_i}$ with $k_i$ being the number of
$1$s in $\bfy^{(i)}$. We observe 
\begin{align*}
  S_T(\mathcal{H}_{f} | \x^T) &\ge
\sum_{y^T\in \{0,1\}^T}\prod_{i=1}^{d}\sup_{w_i\in [-d^{-1/s},d^{-1/s}]}P_f(\bfy^{(i)}|w_i) \\
&=\prod_{i=1}^{d}\sum_{\bfy^{(i)}\in \{0,1\}^{T/d}}
\sup_{w_i\in [-d^{-1/s},d^{-1/s}]}P_f(\bfy^{(i)}|w_i)\\
&= \Big(\sum_{\bfy\in \{0,1\}^{T/d}}\sup_{w\in [-d^{-1/s},d^{-1/s}]}P_f(\bfy|w)\Big)^d\\
&\ge \left(\sum_{\bfy\in \{T/d\}}\sup_{w\in [c_1-c_2d^{-1/s},c_1+
c_2d^{-1/s}]}P(\bfy\mid w)\right)^d
\end{align*}
where $P(\bfy\mid w)$ is as in Lemma~\ref{lem6} and the last inequality 
holds since $[c_1-c_2d^{-1/s},c_1+c_2d^{-1/s}]\subset f([d^{-1/s},d^{-1/s}])$ 
by the assumption. Now, Lemma~\ref{lem6} implies that
$$S_T(\mathcal{H}_{f} \mid \x^T)\ge c^d\left(\frac{T}{d^{(s+2)/s}}\right)^{d/2},$$
where $c$ is some absolute constant that is independent of $d,T$. We conclude
$$
r^a_T(\mathcal{H}_f)\ge r^*_T(\mathcal{H}_f)\ge \log S_T(\mathcal{H}_{f} | \x^T)\ge 
\frac{d}{2}\log \left(\frac{T}{d^{(s+2)/s}}\right)-O(d)
$$
which completes the proof.

\section{Proof of Lemma~\ref{th6}}
\label{app:6}

We first introduce a discretized notion of fat-shattering number, 
which can be viewed as a misspecified Littlestone dimension~\cite{ben2009agnostic, daniely2011multiclass}, see also~\cite{rakhlin2015martingale}. 
For any $\alpha>0$, we can choose $K\le \lceil 1/2\alpha\rceil$ points 
$z_1<z_2\cdots<z_K$ in the interval $[0,1]$ such that any point in $[0,1]$ is 
$\alpha$ close to some $z_k$ and $z_{k+1}-z_k= 2\alpha$ for all $k\in[K]$. 
Now, we define a discretized class $\mathcal{H}'$ for the $[0,1]$-valued class 
$\mathcal{H}$ in the following way. For any $h\in \mathcal{H}$, we define function 
$h'\in \mathcal{H}'$ such that for any $\x\in \mathcal{X}$ we have
$$h'(\x)=\arg\min_{z_k\in \{z_1,\cdots,z_K\}}|z_k-h(\x)|,$$
where we break ties arbitrarily.

We now view the functions in $\mathcal{H}'$ as functions map 
$\mathcal{X}\rightarrow [K]$ (i.e., we view each $z_k$ as its index $k$). 
For any discretized class $\mathcal{H}'$, we define the discretized
$1$-shattering as follows. For any $\mathcal{X}$-valued tree $\tau$ of depth $d$, 
we say $\mathcal{H}'$ $1$-shatters $\tau$, if there exists $[K]$-valued tree 
$\text{s}:\{0,1\}^d_*\rightarrow [K]$ such that for any $\epsilon_1^d\in \{0,1\}^d_*$ 
there exist $h'\in \mathcal{H}'$ such that for all $t\in[d]$:
\begin{itemize}
    \item[1.] If $\epsilon_t=0$, then $h'(\tau(\epsilon_1^{t-1}))\le \textbf{s}(\epsilon_1^{t-1})-1.$
    \item[2.]if $\epsilon_t=1$, then
    $h'(\tau(\epsilon_1^{t-1}))\ge \textbf{s}(\epsilon_1^{t-1})+1.$
\end{itemize}

\begin{definition}
\label{def3}
The discretized $1$-shattering number of a discretized class $\mathcal{H}'$ is 
defined to be the maximum number $d$ such that $\mathcal{H}'$ 
$1$-shatters some tree $\tau$ of depth $d$. This number is denoted as $\textbf{FAT}_1(\mathcal{H}')$. If no such tree exists, we define the $1$-shattering number to be $0$ if $\mathcal{H}'$ is non-empty and $-1$ if $\mathcal{H}'$ is empty.
\end{definition}

The proof of Lemma~\ref{th6} follows from the following three lemmas.

\begin{lemma}
\label{lem7}
The discretized $1$-shattering number of $\mathcal{H}'$ 
is upper bounded by the $\alpha$-fat shattering number of 
$\mathcal{H}$ where $\mathcal{H}'$ is the discretized class of $\mathcal{H}$ at scale $\alpha$.
\end{lemma}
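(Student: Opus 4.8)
The plan is to exhibit a direct embedding: any tree $\tau$ that is $1$-shattered by the discretized class $\mathcal{H}'$ is already $\alpha$-fat shattered by the original class $\mathcal{H}$, which immediately yields $\textbf{FAT}_1(\mathcal{H}') \le d(\alpha)$. First I would fix a tree $\tau$ of depth $d = \textbf{FAT}_1(\mathcal{H}')$ together with the witnessing $[K]$-valued tree $\textbf{s}$ guaranteed by Definition~\ref{def3}. The idea is to lift $\textbf{s}$ to a $[0,1]$-valued witness tree for $\mathcal{H}$ by replacing each integer label $k$ with the corresponding grid point $z_k$; call this tree $\textbf{s}'(\epsilon_1^{t-1}) = z_{\textbf{s}(\epsilon_1^{t-1})}$.

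Next I would verify that $\textbf{s}'$ witnesses $\alpha$-fat shattering of $\tau$ by $\mathcal{H}$. Fix any sign pattern $\epsilon_1^d \in \{0,1\}^d_*$; by the $1$-shattering hypothesis there is $h' \in \mathcal{H}'$ with $h'(\tau(\epsilon_1^{t-1})) \le \textbf{s}(\epsilon_1^{t-1}) - 1$ when $\epsilon_t = 0$ and $h'(\tau(\epsilon_1^{t-1})) \ge \textbf{s}(\epsilon_1^{t-1}) + 1$ when $\epsilon_t = 1$. By construction $h'$ is the discretization of some $h \in \mathcal{H}$, meaning $h'(\x)$ is the grid point nearest to $h(\x)$, so $|h(\x) - z_{h'(\x)}| \le \alpha$ for every $\x$. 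Since consecutive grid points are spaced by $2\alpha$, the index inequality $h'(\tau(\epsilon_1^{t-1})) \le \textbf{s}(\epsilon_1^{t-1}) - 1$ translates to $z_{h'(\tau(\epsilon_1^{t-1}))} \le z_{\textbf{s}(\epsilon_1^{t-1})} - 2\alpha$, and then $h(\tau(\epsilon_1^{t-1})) \le z_{h'(\tau(\epsilon_1^{t-1}))} + \alpha \le \textbf{s}'(\epsilon_1^{t-1}) - \alpha$; the case $\epsilon_t = 1$ is symmetric. Hence the same $h \in \mathcal{H}$ witnesses the required separation with margin $\alpha$ around $\textbf{s}'$, so $\mathcal{H}$ $\alpha$-fat shatters $\tau$.

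Finally, since $\tau$ has depth $\textbf{FAT}_1(\mathcal{H}')$ and $\mathcal{H}$ $\alpha$-fat shatters a tree of that depth, Definition~\ref{def2} gives $\textbf{FAT}_1(\mathcal{H}') \le d(\alpha)$, which is the claim. The only real point requiring care — the ``main obstacle,'' though it is minor — is making sure the arithmetic relating index gaps to value gaps is done with the correct direction of rounding and the factor-of-two spacing, and handling the edge cases where $h'$ lands on the first or last grid point (there the one-sided inequality $h'(\x) \le \textbf{s}(\x) - 1$ or $h'(\x) \ge \textbf{s}(\x)+1$ still forces $\textbf{s}(\x)$ to be a valid interior index, so $z_{\textbf{s}(\x)}$ is well defined). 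Everything else is bookkeeping.
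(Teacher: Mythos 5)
Your proof is correct and follows essentially the same route as the paper: lift the integer witness tree $\textbf{s}$ to a $[0,1]$-valued tree $\textbf{s}'$ via $\textbf{s}'(\cdot)=z_{\textbf{s}(\cdot)}$, then use the $2\alpha$ grid spacing and the nearest-point property of the discretization to convert the index gap of $1$ into a value gap of $\alpha$. The paper's appendix is terser (it just asserts the key fact that a point closer to $z_l$ than to $z_k$ is at distance $\ge\alpha$ from $z_k$), whereas you spell out the arithmetic explicitly; both arrive at the same conclusion.
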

\begin{proof}
Let $\tau$ be the tree of depth $d$ that is shattered by 
$\mathcal{H}$ with a $[K]$-valued tee $\textbf{s}$. We define a 
$[0,1]$-valued tree $\textbf{s}'$ as follows for any $\epsilon_1^t\in \{0,1\}^d_*$, 
$$\textbf{s}'(\epsilon_1^{t-1})=z_{\textbf{s}(\epsilon_1^{t-1})}.
$$
We now show that the $\tau$ and $\textbf{s}'$ are  the desired pair that is 
$\alpha$-shattered by $\mathcal{H}$. This follows from the fact that for any $z_k$ 
and $z_{l}$ with $k\not=l$ if some $y\in [0,1]$ is closer to $z_l$ , then
$$|y-z_{k}|\ge \alpha$$
as easy to see.
\end{proof}

For any discretized class $\mathcal{H}'$, we say a class 
$\mathcal{G}$ of functions map $\mathcal{X}^*\rightarrow [K]$ $1$-covers 
$\mathcal{H}'$ if for any $\x_1,\cdots,\x_T\in \mathcal{X}$ and $h'\in \mathcal{H}'$ 
there exists $g\in \mathcal{G}$ such that for all $t\in [T]$ 
$$|h'(\x_t)-g(\x^t)|\le 1.$$
The following result is crucial for our following analysis, 
which is an analogy of Lemma 12 of~\cite{ben2009agnostic} (see also~\cite{daniely2011multiclass, rakhlin2010online}).

\begin{algorithm}[t]
\caption{M-SOA algorithm}\label{alg:app}
\textbf{Input}: Hypothesis class $\mathcal{H}$ with functions map $\mathcal{X}\rightarrow [K]$

\begin{algorithmic}[1]
\State Let $\mathcal{H}^*=\mathcal{H}$
\For{$t=1,\cdots,T$}
\State Receive feature $\x_t$
\State For $k\in [K]$, let 
        $$\mathcal{H}^*_{(\x_t,k)}\overset{\text{def}}{=}\{h\in \mathcal{H}^*\mid h(\x_t)=k\}$$
\State Make prediction
        $$\hat{y}_t=\arg\max_{k\in [K]}\textbf{FAT}_1(\mathcal{H}^*_{(\x_t,k)})$$
     \hspace{0.16in}   (where we break ties arbitrarily and deal with empty classes as in Definition~\ref{def3})
\State Receive label $y_t$
\State If $|\hat{y}_t-y_t|\ge 2$, set
        $$\mathcal{H}^*=\mathcal{H}^*_{(\x_t,y_t)}$$
\State If $|\hat{y}_t-y_t|< 2$, set
        $$\mathcal{H}^*=\mathcal{H}^*$$
\EndFor
\end{algorithmic}

\end{algorithm}

\begin{lemma}\label{lem8}
Suppose the discretized $1$-shattering number of $\mathcal{H}'$ is upper bounded by 
$d$, then there exists a $1$-covering set $\mathcal{G}$ of $\mathcal{H}'$ such that
$$|\mathcal{G}|\le \sum_{t=0}^d\binom{T}{t}K^t\le (TK)^{d+1}.$$
\end{lemma}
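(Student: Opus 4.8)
The plan is to prove Lemma~\ref{lem8} by running the mistake-bounded M-SOA algorithm (Algorithm~\ref{alg:app}) against every realizable sequence and using the finite mistake bound to enumerate the possible output trajectories. First I would argue that the M-SOA algorithm makes at most $d$ mistakes (where a ``mistake'' at round $t$ means $|\hat y_t - y_t|\ge 2$) on any sequence $(\x_1,h'(\x_1)),\dots,(\x_T,h'(\x_T))$ that is consistent with some $h'\in\mathcal{H}'$. The standard SOA argument applies: whenever a mistake occurs the algorithm restricts $\mathcal{H}^*$ to $\mathcal{H}^*_{(\x_t,y_t)}$, and I claim this strictly decreases $\textbf{FAT}_1(\mathcal{H}^*)$. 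Indeed, by the tie-breaking rule the predicted label $\hat y_t$ maximizes $\textbf{FAT}_1(\mathcal{H}^*_{(\x_t,k)})$ over $k$; if the true label $y_t$ satisfies $|\hat y_t-y_t|\ge 2$, then I must show $\textbf{FAT}_1(\mathcal{H}^*_{(\x_t,y_t)})<\textbf{FAT}_1(\mathcal{H}^*)$. This is where the $1$-shattering definition does the work: if two distinct labels $k,k'$ with $|k-k'|\ge 2$ both achieved $\textbf{FAT}_1$ equal to the full value $D=\textbf{FAT}_1(\mathcal{H}^*)$, one could splice their shattered trees of depth $D$ together at a fresh root labelled $\x_t$ with witness value chosen between $k$ and $k'$ (using $|k-k'|\ge 2$ so a value $s$ with $k\le s-1$ and $k'\ge s+1$ exists), producing a tree of depth $D+1$ that is $1$-shattered by $\mathcal{H}^*$ — a contradiction. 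Hence at most one label attains the maximum value $D$, so either no mistake occurs or the post-update class has strictly smaller $\textbf{FAT}_1$; since $\textbf{FAT}_1(\mathcal{H}')\le d$ and the quantity is a nonnegative integer (or $-1$ only when the class is empty, which cannot happen along a realizable run), there are at most $d$ mistakes.

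Next I would build the covering set $\mathcal{G}$ from the algorithm's behavior. Fix a sequence of features $\x_1,\dots,\x_T$; the algorithm's prediction at round $t$ is a deterministic function of $\x^t$ and of the past revealed labels $y^{t-1}$, but along a realizable run the predictions deviate from the true values at no more than $d$ rounds. So I would define, for each choice of a mistake set $S\subseteq[T]$ with $|S|\le d$ and each choice of ``correction values'' $(c_s)_{s\in S}\in[K]^{|S|}$ specifying what the true label was forced to be at those rounds, a function $g_{S,(c_s)}:\mathcal{X}^*\to[K]$ that simulates M-SOA, outputting its own prediction at rounds outside $S$ and outputting the prescribed correction $c_s$ at rounds $s\in S$ (and feeding $c_s$ back into the simulation as the revealed label). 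Note $g$ depends on the full prefix $\x^t$, which is exactly what Definition~\ref{def1}'s sequential covers allow. For any $h'\in\mathcal{H}'$ and any $\x^T$, take $S$ to be the actual mistake rounds of M-SOA on the realizable sequence generated by $h'$ and set $c_s=h'(\x_s)$; then $g_{S,(c_s)}$ reproduces M-SOA's run exactly, and at every round $t$ we have $|g_{S,(c_s)}(\x^t)-h'(\x_t)|\le 1$ — at rounds in $S$ because $g$ outputs $h'(\x_t)$ itself, and at rounds outside $S$ because M-SOA did not make a mistake there, i.e. $|\hat y_t - h'(\x_t)|<2$ hence $\le 1$ since both are integers in $[K]$. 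Thus $\mathcal{G}=\{g_{S,(c_s)}\}$ is a $1$-cover of $\mathcal{H}'$, and counting the choices gives $|\mathcal{G}|\le\sum_{t=0}^d\binom{T}{t}K^t$, with the crude bound $\le(TK)^{d+1}$ following by the usual estimate $\sum_{t=0}^d\binom{T}{t}K^t\le (d+1)\binom{T}{d}K^d\le (TK)^{d+1}$ (or, more simply, $\sum_{t=0}^d \binom{T}{t} K^t \le \sum_{t=0}^d (TK)^t \le (TK)^{d+1}$ for $TK\ge 1$).

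The main obstacle I anticipate is making the tree-splicing step in the mistake bound fully rigorous in the \emph{sequential}/tree setting: unlike the classical Littlestone case, the objects being shattered are $\mathcal{X}$-labelled binary trees of a given depth, and I need to check that attaching two depth-$D$ shattered trees as the two children of a new root (labelled with $\x_t$, witness value between the two attained labels) yields a genuinely $1$-shattered tree of depth $D+1$ for $\mathcal{H}^*$ — i.e. that every root-to-leaf path is realized by some hypothesis in $\mathcal{H}^*$ with the correct one-sided margins. This requires that $\mathcal{H}^*_{(\x_t,k)}$ and $\mathcal{H}^*_{(\x_t,k')}$ are both subsets of $\mathcal{H}^*$ and that a hypothesis witnessing a path in the $k$-subtree indeed has $h(\x_t)=k\le s-1$, handling the two cases $\epsilon_t=0$ and $\epsilon_t=1$ at the root; it is routine but must be spelled out. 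A secondary point to be careful about is the bookkeeping of ``mistakes'' versus ``updates'': the algorithm only updates $\mathcal{H}^*$ when $|\hat y_t-y_t|\ge2$, so rounds where $\hat y_t\ne y_t$ but they differ by exactly $1$ are \emph{not} mistakes and not in $S$ — which is fine, because those rounds still satisfy the $1$-covering requirement $|g(\x^t)-h'(\x_t)|\le 1$. This is exactly why the discretized $1$-shattering notion (rather than exact-prediction Littlestone dimension) is the right complexity measure here, and I would flag that explicitly. Combined with Lemma~\ref{lem7} and the discretization argument relating $\mathcal{H}'$ back to $\mathcal{H}$ (every point of $[0,1]$ is within $\alpha$ of its $z_k$, with spacing $2\alpha$, and $K\le\lceil 1/2\alpha\rceil$), Lemma~\ref{lem8} will yield Lemma~\ref{th6} by taking scale $\alpha/3$ and noting $|h'(\x_t)-g(\x^t)|\le1$ in index units translates to $|h(\x_t)-\tilde g(\x^t)|\le 3\cdot(\alpha/3)\cdot(\text{constant})$; I would track the constants so the final bound reads $\lceil 3/2\alpha\rceil^{d(\alpha/3)+1}$ as stated.
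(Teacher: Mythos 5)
Your proposal is correct and follows essentially the same route as the paper: bound M-SOA's mistake count by the discretized $1$-shattering number via a tree-splicing contradiction, then enumerate the cover by choices of a mistake set $S\subseteq[T]$, $|S|\le d$, and corrections in $[K]^{|S|}$. The paper phrases the mistake bound as an explicit double induction on $d$ and $T$, whereas you phrase it as a strict potential decrease of $\textbf{FAT}_1(\mathcal{H}^*)$ at each mistake, but these are the same idea.

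One small imprecision worth flagging: after the splicing contradiction you write ``at most one label attains the maximum value $D$,'' which does not follow. What follows is only that no two labels \emph{at distance $\ge 2$} can both attain $D$; two adjacent labels $k,k+1$ could both attain it. That weaker statement is all you need, since the update only fires when $|\hat y_t-y_t|\ge2$: in that event $\hat y_t$ attains the max over labels, and if that max equals $\textbf{FAT}_1(\mathcal{H}^*)$ then $y_t$ cannot also attain it (splicing), while if the max is already strictly less then every label's subclass, including $y_t$'s, has smaller $\textbf{FAT}_1$. Either way the potential strictly drops, the class stays nonempty along a realizable run so the potential never reaches $-1$, and you get the $d$-mistake bound. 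The cover construction and counting are exactly the paper's.
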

\begin{proof}
We now describe an algorithm that is similar to the SOA algorithm of~\cite{ben2009agnostic}, 
which we will call it M-SOA (Algorithm~\ref{alg:app})\footnote{The major difference with the standard SOA is steps 7-8 and where "M" stands for misspecified.}. The algorithm goes as follows: it maintains a running hypothesis class $\mathcal{H}^*$, initially equals $\mathcal{H}'$. Let $(\x_t,y_t)$ be the sample label pair received at round $t$. We will denote by $\mathcal{H}^*_{(\x_t,y_t)}$ the functions in $\mathcal{H}^*$ that is consistent with $(\x_t,y_t)$, i.e., for all $h\in \mathcal{H}^*_{(\x_t,y_t)}$ we have
$$h(\x_t)=y_t.$$

At time step $t$, the algorithm M-SOA will predict $k\in [K]$ such that 
$\textbf{FAT}_1(\mathcal{H}^*_{(\x_t,k)})$ is maximum, where we denote by $\textbf{FAT}_1(\mathcal{H}^*_{(\x_t,k)})$ the discretised 1-shattering number of $\mathcal{H}^*_{(\x_t,k)}$ and break ties arbitrarily. After receiving the true label $y_t$, the M-SOA algorithm will do the following. If $|\hat{y}_t-y_t|\ge 2$, then it sets $\mathcal{H}^*=\mathcal{H}^*_{(\x_t,y_t)}$. Else, it remains on the same $\mathcal{H}^*$. We then continue the prediction procedure for the next time step with the new $\mathcal{H}^*$.

We say the algorithm M-SOA makes an error at time step $t$ if $|\hat{y}_t-y_t|\ge 2$ 
where $\hat{y}_t$ is the prediction given by M-SOA at time step $t$. 
We claim that the M-SOA will make at most $d$ errors if the
samples $(\x^T,y^T)$ is consistent with some $h\in \mathcal{H}'$.

To see this, we prove by induction on $d$ and $T$ (the base case for $d=0$ or $T=0$ is 
easy to check). Suppose we have observed $\x_1$ at the first step. 
We show that there can not be two element $k_1,k_2\in [K]$ such that $|k_1-k_2|\ge 2$ 
and both $\mathcal{H}'_{(\x_1,k_1)}$ and $\mathcal{H}'_{(\x_1,k_2)}$ has discretized 
$1$-shattering number $\ge d$. Otherwise, we can concatenate the shattering tree of 
$\mathcal{H}'_{(\x_1,k_1)}$ and $\mathcal{H}'_{(\x_1,k_2)}$ with the root labeled by $\x_1$ to form a depth
 $d+1$ shattering tree of $\mathcal{H}'$ (with $\textbf{s}(\phi)$ being any number $\in(k_1,k_2)$). This is a contradiction, 
since the discretized $1$-shattering number of $\mathcal{H}'$ is upper bounded by $d$. 
This shows that either we will make no error at the first step or the discretized 
$1$-shattering number decreased by at least $1$ on the remaining consistent class of functions 
(after $y_1$ has been revealed). For the first case, by induction hypothesis for
$T-1$ we have the number of errors is at most $d$. For the second case, we 
also have the number of errors upper bounded by $d-1+1=d$.

We now follow the idea from the proof of Lemma 12 of~\cite{ben2009agnostic} to 
construct a covering set $\mathcal{G}$. For any subset $I\subset [T]$ of size $|I|\le d$ 
and $\{k_t\}_{t\in I}\in [K]^{|I|}$, we define a function $g$ by running our 
M-SOA algorithm by changing steps $7-8$ as follows. At each time step $t\in [I]$, we update $\mathcal{H}^*=\mathcal{H}^*_{(\x_t,k_t)}$. Otherwise, 
for any $t\not\in I$, we remain on the same $\mathcal{H}^*$. The values of $g$ 
for each $\x^t$ is given by the output of M-SOA at time step $t$.

Since the M-SOA will make at most $d$ errors if the sample-label pairs 
$(\x^T,y^T)$ are consistent with some function in $\mathcal{H}'$, we 
know that any $h\in \mathcal{H}'$ is $1$-covered by the function generated by running M-SOA 
with some $I$ and $\{k_t\}_{t\in I}$ in the above fashion.
To complete we observe that by a
simple counting argument the number of such pairs 
$I$ and $\{k_t\}_{t\in I}$ is at most 
$$\sum_{t=0}^d\binom{T}{t}K^t$$
which completes the proof.
\end{proof}

Finally, we need the following lemma that relates $1$-covering of 
$\mathcal{H}'$ with global sequential $\alpha$-covering of $\mathcal{H}$.

\begin{lemma}
\label{lem9}
Suppose there exist a $1$-covering set $\mathcal{G}$ of $\mathcal{H}'$, 
then there exists a global $3 \alpha$-covering $\mathcal{G}'$ of $\mathcal{H}$ 
such that $|\mathcal{G}|=|\mathcal{G}'|$, where $\mathcal{H}'$ is the discretised class of $\mathcal{H}$ at scale $\alpha$.
\end{lemma}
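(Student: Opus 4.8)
The plan is to lift the given $[K]$-valued $1$-covering back to a $[0,1]$-valued covering by composing with the embedding $k\mapsto z_k$ of the grid $z_1<z_2<\cdots<z_K$ used to build $\mathcal{H}'$. Concretely, for each $g\in\mathcal{G}$ I would define $g'\colon\mathcal{X}^*\to[0,1]$ by $g'(\x^t)=z_{g(\x^t)}$, and set $\mathcal{G}'=\{g':g\in\mathcal{G}\}$. Since the points $z_1,\dots,z_K$ are pairwise distinct, the assignment $g\mapsto g'$ is injective (one recovers $g$ by applying $z^{-1}$ coordinatewise), so $|\mathcal{G}'|=|\mathcal{G}|$ as required.

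Next I would check the covering property. Fix an arbitrary $\x^T\in\mathcal{X}^T$ and $h\in\mathcal{H}$, and let $h'\in\mathcal{H}'$ be the discretization of $h$ at scale $\alpha$. By the very definition of $\mathcal{H}'$, for each $t$ the index $h'(\x_t)$ picks a grid point within distance $\alpha$ of $h(\x_t)$, i.e.\ $|h(\x_t)-z_{h'(\x_t)}|\le\alpha$. Now invoke the hypothesis that $\mathcal{G}$ is a $1$-covering of $\mathcal{H}'$: there is $g\in\mathcal{G}$ with $|h'(\x_t)-g(\x^t)|\le 1$ for all $t\in[T]$. Because consecutive grid points are at distance $2\alpha$ and $h'(\x_t),g(\x^t)\in[K]$, an index gap of at most $1$ gives $|z_{h'(\x_t)}-z_{g(\x^t)}|\le 2\alpha$. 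Combining the two estimates via the triangle inequality,
\begin{align*}
|h(\x_t)-g'(\x^t)| &= |h(\x_t)-z_{g(\x^t)}| \le |h(\x_t)-z_{h'(\x_t)}| + |z_{h'(\x_t)}-z_{g(\x^t)}| \le \alpha+2\alpha = 3\alpha
\end{align*}
for every $t\in[T]$, which is exactly the defining inequality of a global sequential $3\alpha$-covering (valid whenever $\alpha<1/3$ so that $3\alpha<1$, matching Definition~\ref{def1}). This finishes the argument.

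\textbf{Main obstacle.} There is essentially no deep obstacle; the statement is a bookkeeping reduction. The one point to handle with care is tracking the two distinct sources of error separately — the discretization error $\alpha$ coming from replacing $h$ by $h'$, and the $2\alpha$ coming from the at-most-one-step discrepancy in the index space of the $1$-cover — and observing that it is precisely their sum that produces the constant $3$, so this reduction cannot yield anything better than $3\alpha$. I would also be careful to keep the horizon fixed at $T$ throughout, since both the assumed $1$-covering of $\mathcal{H}'$ and the desired global covering of $\mathcal{H}$ are quantified over length-$T$ sequences $\x^T$, and to recall that $\mathcal{G}$ may contain prefix-dependent functions (so $g'(\x^t)=z_{g(\x^t)}$ is likewise prefix-dependent), which is harmless for the triangle-inequality step above.
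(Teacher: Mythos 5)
Your proposal is correct and is essentially identical to the paper's proof: you lift the $[K]$-valued cover by $g'(\x^t)=z_{g(\x^t)}$ and combine the $\alpha$ discretization error with the $2\alpha$ index-gap error via the triangle inequality. The only addition is the explicit injectivity remark for $|\mathcal{G}'|=|\mathcal{G}|$, which the paper leaves implicit.
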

\begin{proof}
For any $g\in \mathcal{G}$, we define a function $g'$ such that for all $\x^t$ we have
$$g'(\x^t)=z_{g(\x^t)}.$$
The claim follows from the fact that any $y$ that is closest to $z_k$ 
satisfies $|y-z_k|\le \alpha$ and if some $z$ $1$-covers $z_k$ 
then we have $|z-z_k|\le 2\alpha$, by triangle inequality
$$|y-z|\le 3\alpha$$
as needed.
\end{proof}

The proof of Lemma~\ref{th6} follows from Lemma~\ref{lem7}, Lemma~\ref{lem8},
and Lemma~\ref{lem9}.

\section{Proof of Theorem~\ref{th7}}
\label{app:7}

It is sufficient to compute the Shtarkov sum as in (\ref{eq-shtarkov}).
For any $y^T\in \{0,1\}^T$ with $k$ $1$s, we claim that
$$\sup_{\textbf{p}\in\mathcal{D}_s}p(y^T)=\frac{1}{k^{k/s}},$$
where
$$
p(y^T)=\prod_{t=1}^Tp_t^{y_t}(1-p_t)^{1-y_t}.
$$
To see this, we use a \emph{perturbation} argument. Denote $I$ be the positions
in $y^T$ that takes value $1$ such that $|I|=k$. For any $\textbf{p}$
such that $p(y^T)$ is maximum, we must have $p_j=0$
for all $j\not\in I$. Suppose otherwise, we then can move some probability mass on
$p_j$ to some $p_i<1$ with $i\in I$, which will increase the value of
$p(y^T)$, thus a contradiction. Now, we need to show that
$$\prod_{i\in I}p_i\le \frac{1}{k^{k/s}},$$
this follows easily by AM-GM (i.e., arithmetic mean vs geometric mean) inequality since
$\sum_{i\in I}p_i^s\le 1$ and it takes equality when
$p_i=\frac{1}{k^{1/s}}$ for all $i\in I$. Now, the Shtarkov sum can be written as
\begin{equation}
    \sum_{k=0}^T\binom{T}{k}\frac{1}{k^{k/s}}.
\end{equation}
To find a lower bound, we only need to estimate the maximum term in the summation. We have
$$\max_{k}\binom{T}{k}\frac{1}{k^{k/s}}\ge \max_{k}\frac{T^k}{k^{(1+1/s)k}}\ge e^{\frac{s+1}{s\cdot e}T^{s/s+1}}
,$$
where the last inequality follows by taking $k=\frac{1}{e} T^{s/s+1}$, and we also use the fact that
$$\binom{T}{k}\ge \frac{T^k}{k^k}.$$
Therefore, we have
$$
r^*_T(\mathcal{D}_s)\ge \frac{s+1}{s\cdot e}T^{s/s+1}=\Omega(T^{s/s+1})
$$
which completes the proof.

\end{document}